\def\eqref#1{equation~\ref{#1}}
\def\1{\bm{1}}
\def\va{{\bm{a}}}
\def\vs{{\bm{s}}}
\DeclareMathAlphabet{\mathsfit}{\encodingdefault}{\sfdefault}{m}{sl}
\SetMathAlphabet{\mathsfit}{bold}{\encodingdefault}{\sfdefault}{bx}{n}
\newcommand{\ie}{\textit{i}.\textit{e}.}
\newcommand{\etal}{\textit{et al}.}
\title{Influence-Based Multi-Agent Exploration}
\newcommand{\printfnsymbol}[1]{%
  \textsuperscript{\@fnsymbol{#1}}%
}
\author{Tonghan Wang\thanks{Equal Contribution.}, Jianhao Wang\printfnsymbol{1}, Yi Wu \& Chongjie Zhang \\
Institute for Interdisciplinary Information Sciences\\
Tsinghua University\\
Beijing, China \\
\texttt{wangth18@mails.tsinghua.edu.cn, wjh720.eric@gmail.com} \\
\texttt{jxwuyi@gmail.com, chongjie@tsinghua.edu.cn}
}
\newtheorem{lemma}{Lemma}
\newcommand{\shortn}{\textup{\texttt{-}}}
\newcommand{\shorte}{\textup{\texttt{=}}}
\newcolumntype{L}{>{$}l<{$}}
\newcolumntype{C}{>{$}c<{$}}
\newcolumntype{R}{>{$}r<{$}}
\begin{document}

\maketitle

\begin{abstract}
Intrinsically motivated reinforcement learning aims to address the exploration challenge for sparse-reward tasks. However, the study of exploration methods in transition-dependent multi-agent settings is largely absent from the literature. We aim to take a step towards solving this problem. We present two exploration methods: exploration via information-theoretic influence (EITI) and exploration via decision-theoretic influence (EDTI), by exploiting the role of interaction in coordinated behaviors of agents. EITI uses mutual information to capture influence  transition dynamics. EDTI uses a novel intrinsic reward, called Value of Interaction (VoI), to characterize and quantify the influence of one agent's behavior on expected returns of other agents. By optimizing EITI or EDTI objective as a regularizer, agents are encouraged to coordinate their exploration and learn policies to optimize team performance. We show how to optimize these regularizers so that they can be easily integrated with policy gradient reinforcement learning. The resulting update rule draws a connection between coordinated exploration and intrinsic reward distribution. Finally, we empirically demonstrate the significant strength of our method in a variety of multi-agent scenarios.
\end{abstract}

\section{Introduction}
Reinforcement learning algorithms aim to learn a policy that maximizes the accumulative reward from an environment. Many advances of deep reinforcement learning rely on a dense shaped reward function, such as distance to the goal~\citep{mirowski2016learning,wu2018building}, scores in games~\citep{mnih2015human} or expert-designed rewards~\citep{wu2016training,OpenAI_dota}, while tend to struggle in many real-world scenarios with sparse rewards. Therefore, many recent works propose to introduce additional intrinsic incentives to boost exploration, including pseudo-counts~\citep{bellemare2016unifying,tang2017exploration,ostrovski2017count}, model-learning improvements~\citep{burda2018large,pathak2017curiosity,burda2018exploration}, and information gain~\citep{florensa2017stochastic,gupta2018meta,kim2019emi}. These works result in significant progress in many challenging tasks such as Montezuma Revenge~\citep{burda2018exploration}, robotic manipulation~\citep{pathak2018zero,riedmiller2018learning}, and Super Mario games~\citep{burda2018large,pathak2017curiosity}.

Notably, most of the existing breakthroughs on sparse-reward environments have been focusing on single-agent scenarios and leave the exploration problem largely unstudied for multi-agent settings -- it is common in real-world applications that multiple agents are required to solve a task in a coordinated fashion~\citep{cao2012overview,nowe2012game,zhang2011coordinated}. This problem has recently attracted attention and several exploration strategies have been proposed for many transition-independent cooperative multi-agent settings~\citep{dimakopoulou2018coordinated, dimakopoulou2018scalable, bargiacchi2018learning, iqbal2019coordinated}. Nevertheless, how to explore effectively in more general scenarios with complex reward and transition dependency among cooperative agents remains an open research problem.

This paper aims to take a step towards this goal. Our basic idea is to coordinate agents' exploration by taking into account their interactions during their learning processes. Configurations where interaction happens (interaction points) lie at critical junctions in the state-action space, through these critical configurations can transit to potentially important under-explored regions. To exploit this idea, we propose exploration strategies where agents start with decentralized exploration driven by their individual curiosity, and are also encouraged to visit interaction points to influence the exploration processes of other agents and help them get more extrinsic and intrinsic rewards. Based on how to quantify influence among agents, we propose two exploration methods. \emph{Exploration via information-theoretic influence} (EITI) uses mutual information (MI) to capture the interdependence between the transition dynamics of agents. \emph{Exploration via decision-theoretic influence} (EDTI) goes further and uses a novel measure called \emph{value of interaction} (VoI) to disentangle the effect of one agent's state-action pair on the expected (intrinsic) value of other agents. By optimizing MI or VoI as a regularizer to the value function, agents are encouraged to explore state-action pairs where they can exert influences on other agents for learning sophisticated multi-agent cooperation strategies. 

To efficiently optimize MI and VoI, we propose augmented policy gradient formulations so that the gradients can be estimated purely from trajectories. The resulting update rule draws a connection between coordinated exploration and the distribution of individual intrinsic rewards among team members, which further explains why our methods are able to facilitate multi-agent exploration.

We demonstrate the effectiveness of our methods on a variety of sparse-reward cooperative multi-agent tasks. Empirical results show that both EITI and EDTI allow for the discovery of influential states and EDTI further filter out interactions that have no effects on the performance. Our results also imply that these influential states are implicitly discovered as subgoals in search space that guide and coordinate exploration. The video of experiments is available at \url{https://sites.google.com/view/influence-based-mae/}.

\section{Related Works}\label{sec:related_works}
Single-agent exploration achieves conspicuous success recently. Provably efficient methods are proposed, such as upper confidence bound (UCB)~\citep{jaksch2010near, azar2017minimax, jin2018is} and posterior sampling for reinforcement learning (PSRL)~\citep{strens2000bayesian, osband2013more, osband2016lower, agrawal2017optimistic}. Given that these methods scale poorly to large or continuous settings, in another line of research, intrinsic rewards are used to guide the agent to explore what makes it curious~\citep{schmidhuber1991possibility, chentanez2005intrinsically, oudeyer2007intrinsic, barto2013intrinsic, bellemare2016unifying, pathak2017curiosity, ostrovski2017count}, and have obtained empirically solid results~\citep{burda2018large, burda2018exploration, kim2019emi}. \citet{houthooft2016vime} and~\citet{barron2018information} use variational information maximizing method to facilitate exploration.

Compared to the success in single-agent domain, progress in multi-agent exploration is less vigorous. \citet{dimakopoulou2018coordinated} and~\citet{dimakopoulou2018scalable} study coverage problems in the concurrent reinforcement learning, \citet{bargiacchi2018learning} propose an exploration method in repeated single-stage problems, and~\citet{iqbal2019coordinated} study several methods to combining decentralized curiosity. All these works focus on transition-independent settings. Another Bayesian exploration approach has been proposed for learning in stateless repeated games~\citep{Chalkiadakis:2003:CMR:860575.860689}. In contrast, this paper focuses on general multi-agent sequential decision making under uncertainty and complex reward and transition dependencies.

Some previous work has also studied intrinsic rewards in multi-agent reinforcement learning. The work that is most related to this paper is \citet{jaques2018intrinsic}, which uses social motivation to help multi-agent reinforcement learning where the influence of one agent on other agents' decision-making process is modelled. The main difference is that our method considers mainly the influence on other agent's transition function and rewarding structure and how to exploit these interactions to facilitate exploration. Hughes \etal~use inequality aversion reward to help intertemporal social dilemmas~\citep{hughes2018inequity}. \citet{strouse2018learning} use mutual information between goal and states or actions as an intrinsic reward to train the agent to share or hide their intentions.

Our method also draws inspiration from applications of mutual information in reinforcement learning. Mutual information is a critical statistical quantity and has been used in single-agent intrinsically motivated exploration~\citep{rubin2012trading, still2012information, salge2014changing, mohamed2015variational, kim2019emi} and multi-agent intention sharing and hiding~\citep{strouse2018learning}.

\section{Settings}

In our work, we consider a fully cooperative multi-agent task that can be modelled by a factored multi-agent MDP $G = \langle I, S, A, T, r, h\rangle$, where $I\equiv\{1,2,...,N\}$ is the finite set of agents, $S\equiv \times_{i\in I} S_i$ is the finite set of joint states and $S_i$ is the state set of agent $i$. At each timestep, each agent selects an action $a_i\in A_i$ at state $\vs$, forming a joint action $\va \in A \equiv \times_{i\in I} A_i,$ resulting in a shared extrinsic reward $r(\vs, \va)$ for each agent and the next state $\vs'$ according to the transition function $T(\vs' | \vs, \va)$.

The objective of the task is that each agent learns a policy $\pi_i(a_i|s_i)$, jointly maximizing team performance. The joint policy $\bm{\pi}\shorte\langle\pi_1, \dots, \pi_N\rangle$ induces an action-value function, $Q^{ext, \bm{\pi}}(\vs, \va) \shorte$ $\mathbb{E}_\tau[\sum_{t=0}^h r^t|\vs^0\shorte\vs, \va^0 \shorte \va, \bm{\pi}]$, and a value function $V^{ext, \bm{\pi}}(\vs)\shorte\max_{\va} Q^{ext, \bm{\pi}}(\vs, \va)$, where $\tau$ is the episode trajectory and $h$ is the horizon.

We adopt a centralized training and decentralized execution paradigm, which has been widely used in multi-agent deep reinforcement learning~\citep{foerster2016learning, lowe2017multi, foerster2018counterfactual, rashid2018qmix}. During training, agents are granted access to the states, actions, (intrinsic) rewards, and value functions of other agents, while decentralized execution only requires individual states.

\section{Influence-Based Coordinated Multi-Agent Exploration}

Efficient exploration is critical for reinforcement learning, particularly in sparse-reward tasks. Intrinsic motivation~\citep{oudeyer2009intrinsic} is a crucial mechanism for behaviour learning since it provides the driver of exploration. Therefore, to trade off exploration and exploitation, it is common for an RL agent to maximize an objective of the expected extrinsic reward augmented by the expected intrinsic reward. Curiosity is one of the extensively-studied intrinsic rewards to encourage an agent to explore according to its uncertainty about the environment, which can be measured by model prediction error~\citep{burda2018large,pathak2017curiosity,burda2018exploration} or state visitation count~\citep{bellemare2016unifying,tang2017exploration,ostrovski2017count}.

While such an intrinsic motivation as curiosity drives effective individual exploration, it is often not sufficient enough for learning in collaborative multi-agent settings, because it does not take into account agent interactions. To encourage interactions, we propose an influence value aims to quantify one agent's influence on the exploration processes of other agents. Maximizing this value will encourage agents to visit interaction points more often through which the agent team can reach configurations that are rarely visited by decentralized exploration. In next sections, we will provide two ways to formulate the influence value with such properties, leading to two exploration strategies. 

Thus, for each agent $i$, our overall optimization objective is:
\begin{equation}
    \begin{aligned}
    J_{\theta_i}[\pi_{i} | \pi_{-i}, p_0] \equiv V^{ext, \bm{\pi}}(\vs_0) + V_i^{int, \bm{\pi}}(\vs_0) + \beta \cdot  I_{-i|i}^{\bm{\pi}},
    \end{aligned}
\end{equation}
where $p_0(\vs_0)$ is the initial state distribution, $\pi_{-i}$ is the joint policy excluding that of agent $i$, and $V_i^{int, \bm{\pi}}(\bm{s})$ is the intrinsic value function of agent $i$, $I_{-i|i}^{\bm{\pi}}$ is the influence value, $\beta>0$ is a weighting term. In this paper, we use the following notations:
\begin{eqnarray}
& &\tilde{r}_i(\vs, \va) =r(\vs, \va)+u_i(s_i, a_i), \\
& &V^{\bm{\pi}}_i(\vs) = V^{ext, \bm{\pi}}(\vs) + V_i^{int, \bm{\pi}}(\vs),\\
& &Q^{\bm{\pi}}_i(\vs,\va) = \tilde{r}_i(\vs, \va) + \sum_{\vs'}T(\vs'|\vs,\va)V_i^{\bm{\pi}}(\vs'),
\end{eqnarray}

where $u_i(s_i, a_i)$ is a curiosity-derived intrinsic reward, $\tilde{r}_i(\vs, \va)$ is a sum of intrinsic and extrinsic rewards, $V^{\bm{\pi}}_i(\vs)$ and $Q^{\bm{\pi}}_i(\vs,\va)$ here contain both intrinsic and extrinsic rewards.


\subsection{Exploration via Information-Theoretic Influence}\label{sec:mi}
One critical problem in our learning framework presented above is to define the influence value $I$. For simplicity, we start with a two-agent case. The first method we propose is to use mutual information between agents' trajectories to measure one agent's influence on other agents' learning processes. Such mutual information can be defined as information gain of one agent's state transition given the other's state and action. Without loss of generality, we define it from the perspective of agent 1:
\begin{equation}
    \begin{aligned}
    MI_{2|1}^{\bm{\pi}}(S_2';S_1,A_1|S_2,A_2) =& \sum_{\vs,\va,s_2' \in(S, A, S_2)} p^{\bm{\pi}}(\vs,\va,s_2') \left[\log p^{\bm{\pi}}(s_2'|\vs,\va) - \log p^{\bm{\pi}}(s_2'|s_2,a_2)\right],
    \end{aligned}
\end{equation}
where $\vs=(s_1,s_2)$ is the joint state, $\va=(a_1,a_2)$ is the joint action, and $S_i$ and $A_i$ are the random variables of state and action of agent $i$ subject to the distribution induced by the joint policy $\bm{\pi}$. So we define $I^{\bm{\pi}}_{2|1}$ as $MI_{2|1}^{\bm{\pi}}(S_2';S_1,A_1|S_2,A_2)$ that captures transition interactions between agents. Optimizing this objective encourages agent $1$ to visited critical points where it can influence the transition probability of agent 2. We call such an exploration method \emph{exploration via information-theoretic influence} (EITI).

Optimizing $MI_{2|1}^{\bm{\pi}}$ with respect to the policy parameters $\theta_1$ of agent 1 is a little bit challenging, because it is an expectation with respect to a distribution that depends on $\theta_1$. The gradient consists of two terms:
\begin{equation}
    \begin{aligned}
    \nabla_{\theta_1} MI^{\bm{\pi}}(S_2';S_1,A_1|S_2,A_2) =& \sum_{\vs,\va,s_2' \in(S, A, S_2)} \nabla_{\theta_1} (p^{\bm{\pi}}(\vs,\va,s_2')) \log \frac{p(s_2'|\vs,\va)} {p^{\bm{\pi}}(s_2'|s_2,a_2)} \\
    +& \sum_{\vs,\va,s_2' \in(S, A, S_2)} p^{\bm{\pi}}(\vs,\va,s_2') \nabla_{\theta_1} \log\frac{p(s_2'|\vs,\va)} {p^{\bm{\pi}}(s_2'|s_2,a_2)}.
    \end{aligned}\label{equ:first_step_of_gradient_of_mi}
\end{equation}

While the second term is an expectation over the trajectory and can be shown to be zero (see Appendix~\ref{appendix:mi}), it is unwieldy to deal with the first term because it requires the gradient of the stationary distribution, which depends on the policies and the dynamics of the environment. Fortunately, the gradient can still be estimated purely from sampled trajectories by drawing inspiration from the proof of the policy gradient theorem~\citep{sutton2000policy}.

The resulting policy gradient update is:
\begin{equation}\label{equ:eiti_update}
    \begin{aligned}
    \nabla_{\theta_1} J_{\theta_1}(t) =  \left(\hat{R}_1^t-\hat{V}_1^{\bm{\pi}}(s_t)\right)\nabla_{\theta_1}\log\pi_{\theta_1}(a_1^t|s_1^t)
    \end{aligned}
\end{equation}
where $\hat{V}_1^{\bm{\pi}}(s_t)$ is an augmented value function of $\hat{R}_1^t=\sum_{t'=t}^h\hat{r}_1^{t'}$ and
\begin{equation}\label{equ:eiti_reward}
\hat{r}_1^t = r^t + u_1^t + \beta \log \frac{p(s_2^{t+1}|s_1^t,s_2^t,a_1^t,a_2^t)}{p(s_2^{t+1} | s_2^t, a_2^t)}.
\end{equation}
The third term, which we call \emph{EITI reward} henceforth, is $0$ when the agents are transition-independent, \ie, when $p(s_2^{t+1}|s_1^t,s_2^t,a_1^t,a_2^t) = p(s_2^{t+1} | s_2^t, a_2^t)$, and is positive when $s_1, a_1$ increase the probability of agent 2 translating to $s_2'$. Therefore, the EITI reward is an intrinsic motivation that encourages agent 1 to visit more frequently the state-action pairs where it can influence the trajectory of agent 2. The estimation of $p(s_2^{t+1}|s_1^t,s_2^t,a_1^t,a_2^t)$ and $p(s_2^{t+1} | s_2^t, a_2^t)$ are discussed in Appendix~\ref{appendix:p_d_p}. We assume that agents know the states and actions of other agents, but this information is only available during centralized training. When execution, agents only have access to their local observations.

\subsection{Exploration via Decision-Theoretic Influence}\label{sec:voi}
Mutual information characterizes the influence of one agent's trajectory on that of the other and captures interactions between the transition functions of the agents. However, it does not provide the value of these interactions to identify interactions related to more internal and external rewards ($\tilde{r}$). To address this issue, we propose \emph{exploration via decision-theoretic influence} (EDTI) based on a decision-theoretic measure of $I$, called \emph{Value of Interaction} (VoI), which disentangles both transition and reward influences. VoI is defined as the expected difference between the action-value function of one agent (e.g., agent 2) and its counterfactual action-value function without considering the state and action of the other agent (e.g., agent 1):
\begin{equation}
\begin{aligned}
	VoI_{2|1}^{\bm{\pi}}(S_2';S_1,A_1|S_2,A_2) =& \sum_{\vs,\va,s_2'\in(S,A,S_2)} p^{\bm{\pi}}(\vs,\va,s_2') \left[Q_2^{\bm{\pi}}(\vs,\va,s_2') - Q_{2|1}^{\bm{\pi}, *}(s_2,a_2,s_2')\right], \label{equ:original_form_of_VoI}
\end{aligned}
\end{equation}
where $Q_2^{\bm{\pi}}(\vs, \va, s_2')$ is the expected rewards (including intrinsic rewards) of agent 2 defined as:
\begin{equation}
\begin{aligned}
    Q_2^{\bm{\pi}}(\vs,\va,s_2') = \tilde{r}_2(\vs,\va) + \gamma \sum_{s_1'} p(s_1'|\vs, \va, s_2')V_2^{\bm{\pi}}(\vs'),
\end{aligned}
\end{equation}
and the counterfactual action-value function $Q_2^{\bm{\pi}, *}$ (also includes intrinsic and extrinsic rewards) can be obtained by marginalizing out the state and action of agent 1:
\begin{equation}\label{equ:counterfactual_q}
	Q_{2|1}^{\bm{\pi}, *}(s_2,a_2,s_2') = \sum_{s_1^*, a_1^*} p^{\bm{\pi}}(s_1^*, a_1^* | s_2, a_2) [\tilde{r}_2(s_1^*, s_2, a_1^*, a_2) + \gamma \sum_{s_1'} p(s_1'|s_1^*,s_2,a_1^*,a_2, s_2') V_2^{\bm{\pi}}(\vs')].
\end{equation}
Note that the definition of VoI is analogous to that of MI and the difference lies in that $\log p(\cdot)$ measures the amount of information while $Q$ measures the action value. Although VoI can be obtained by learning $Q_2^{\bm{\pi}}(\vs, \va)$ and $Q_2^{\bm{\pi}}(s_2, a_2)$ and calculating the difference, we propose to use a counterfactual approach, which explicitly marginalizes out $s_1^*$ and $a_1^*$ utilizing the estimated model transition probability $p^{\bm{\pi}}(s_2' | s_2, a_2)$ and $p(s_2' |\vs, \va)$ to get a more accurate value estimate~\citep{feinberg2018model}. The performances of these two formulations are compared in the experiments.

Value functions $Q$ and $V$ used in VoI contains both expected \emph{external} rewards and \emph{internal} rewards, which will not only encourage coordinated exploration by the influence between intrinsic rewards but also filter out meaningless interactions which will not lead to extrinsic reward after intrinsic reward diminishes. To facilitate the optimization of VoI, we rewrite it under the expectation over state-action trajectories.
\begin{equation}
    \begin{aligned}\label{equ:voi_definition}
        VoI_{2|1}^{\bm{\pi}}(S_2';S_1,A_1|S_2,A_2) = \mathbb{E}_\tau\left[\tilde{r}_2(\vs, \va) - \tilde{r}_2^{\bm{\pi}}(s_2, a_2)
        + \gamma \left(1 - \frac{p^{\bm{\pi}}(s_2' | s_2, a_2)}{p(s_2' |\vs, \va)}\right) V_2^{\bm{\pi}}(\vs')\right],
    \end{aligned}
\end{equation}
where $\tilde{r}_2^{\bm{\pi}}(s_2, a_2)$ is the counterfactual immediate reward. The detailed proof is deferred to Appendix \ref{appendix:voi_definition_proof}. From this definition, we can intuitively see how VoI reflects the value of interactions. $\tilde{r}_2(\vs, \va) - \tilde{r}_2^{\bm{\pi}}(s_2, a_2)$ and $1 - p^{\bm{\pi}}(s_2' | s_2, a_2)/p(s_2' |\vs, \va)$ measure the influence of agent 1 on the immediate reward and the transition function of agent 2, and $V_2^{\bm{\pi}}(\vs')$ serves as a scale factor in terms of future value. Only when agent 1 and agent 2 are both transition- and reward-independent, \ie, when $p^{\bm{\pi}}(s_2' | s_2, a_2)=p(s_2' | \vs, \va)$ and $r_2^{\bm{\pi}}(s_2, a_2) = r_2(\vs, \va)$ will VoI equal to 0. In particular, maximizing VoI with respect to policy parameters $\theta_1$ will lead agent 1 to meaningful interaction points, where $V_2^{\bm{\pi}}(\vs')$ is high and $s_1$, $a_1$ can increase the probability that $\vs'$ is reached.  

In this learning framework, agents initially explore the environment individually driven by its own curiosity, during which process they will discover potentially valuable interaction points where they can influence the transition function and (intrinsic) rewarding structure of each other. VoI highlights these points and encourages agents to visit these configurations more frequently. As intrinsic reward diminishes, VoI can gradually distinguish those interaction points which are necessary to get extrinsic rewards.
\subsubsection{Policy Optimization with VoI}\label{sec:optimize_VoI}
We want to optimize $J_{\theta_i}$ with respect to the policy parameters $\theta_i$, where the most cumbrous term is $\nabla_{\theta_i}VoI_{-i | i}$. For brevity, we can consider a two-agent case, e.g., optimizing $VoI_{2|1}$ with respect to the policy parameters $\theta_1$. Directly computing the gradient of $\nabla_{\theta_1} VoI_{2|1}$ is not stable, because $VoI_{2|1}$ contains policy-dependent functions $\tilde{r}_2^{\bm{\pi}}(s_2, a_2), p^{\bm{\pi}}(s_2' | s_2, a_2)$, and $V_2^{\bm{\pi}}(\vs')$ (see Eq.~\ref{equ:voi_definition}). To stabilize training , we use target functions to approximate these policy-dependent functions, which is a common technique used in deep RL \citep{mnih2015human}. With this approximation, we denote
\begin{equation}
        g_2(\vs,\va) = \tilde{r}_2(\vs,\va) - 
        \tilde{r}^-_2(s_2, a_2) + \gamma  \sum_{\vs'} T(\vs'|\vs, \va) \left(1 - \frac{p^-(s_2' | s_2, a_2)}{p(s_2'|\vs, \va)}\right)V^-_2(s_1', s_2').
\end{equation}
where $r^-_2$, $p^-$, and $V^-_2$ are corresponding target functions. As these target functions are only periodically updated during the learning, their gradients over $\theta_1$ can be approximately ignored. Therefore, from Eq.~\ref{equ:voi_definition}, we have 
\begin{equation}
    \nabla_{\theta_1}VoI_{2|1}^{\bm{\pi}}(S_2';S_1,A_1|S_2,A_2) \approx \sum_{\vs,\va\in (S,A)}\left(\nabla_{\theta_1} p^{\bm{\pi}}(\vs,\va)\right)g_2(\vs,\va).
\end{equation}
Similar to the calculation of $\nabla_{\theta_i} MI$, we get the gradient at every step (see Appendix~\ref{appendix:gradient_of_VoI} for proof):
\begin{equation}\label{equ:edti_gradient}
    \nabla_{\theta_1} J_{\theta_1}(t) \approx \left(\hat{R}_1^t-\hat{V}_1^{\bm{\pi}}(s_t)\right)\nabla_{\theta_1}\log\pi_{\theta_1}(a_1^t|s_1^t),
\end{equation}
where $\hat{V}_1^{\bm{\pi}}(s_t)$ is an augmented value function regressed towards $\hat{R}_1^t=\sum_{t'=t}^h\hat{r}_1^{t'}$ and
\begin{equation}\label{equ:edti_reward}
\hat{r}_1^t = r^t + u_1^t + \beta \left[u_2^t + \gamma \left(1 - \frac{p^-(s_2^{t+1} | s_2^t, a_2^t)}{p(s_2^{t+1}|s_1^t,s_2^t,a_1^t,a_2^t)}\right) V^-_2(s_1^{t+1}, s_2^{t+1}) \right].
\end{equation}
We call $u_2^t + \gamma \left(1 - \frac{p^-(s_2^{t+1} | s_2^t, a_2^t)}{p(s_2^{t+1}|s_1^t,s_2^t,a_1^t,a_2^t)}\right) V^-_2(s_1^{t+1}, s_2^{t+1})$ the \emph{EDTI reward}.
\subsection{Discussions}\label{sec:method_discussion}
\textbf{Scale to Large Settings:} For cases with more than two agents, the VoI of agent $i$ on other agents can be defined similarly to Eq.~\ref{equ:original_form_of_VoI}, which can be annotated with $VoI_{-i | i}^{\bm{\pi}}(S_{-i}'; S_i, A_i | S_{-i}, A_{-i})$,  where $S_{-i}$ and $A_{-i}$ are the state and action sets of all agents other than $i$. In practice, agents interaction can often be decomposed to pairwise interaction so $VoI_{-i | i}^{\bm{\pi}}(S_{-i}'; S_i, A_i | S_{-i}, A_{-i})$ is well approximated by the sum of values of pairwise value of interaction:
\begin{equation}
    VoI_{-i | i}^{\bm{\pi}}(S_{-i}'; S_i, A_i | S_{-i}, A_{-i}) \approx \sum_{j\in } VoI_{j | i}^{\bm{\pi}}(S_{j}'; S_i, A_i | S_{-i}, A_{-i}).
\end{equation}

\textbf{Relationship between EITI and EDTI:} EITI and EDTI gradient updates are respectively obtained by information- and decision-theoretical influence and so, it is nontrivial to find that the first part of the EDTI reward is, in fact, a lower bound of the EITI reward:
\begin{equation}
    1- \frac{p(s_{-i}'|s_{-i},a_{-i})}{p(s_{-i}'|\vs,\va)} \leq \log \frac{p(s_{-i}'|\vs,\va)}{p(s_{-i}'|s_{-i},a_{-i})}, \ \ \ \forall \vs, \va, s_{-i}'
\end{equation}
which easily follows given that $\log x \geq 1-1/x$ for $\forall x>0$. This draws a connection between term $1- p(s_{-i}'|s_{-i},a_{-i})/p(s_{-i}'|\vs,\va)$ and optimizing a part of EDIT objective function is optimizing a lower bound of that of EITI in the meantime. 

\textbf{Compare EDTI to Centralized Methods:} Different from a centralized method which directly includes value functions of other agents in the optimization objective, (by setting total reward $\hat{r}_i = r + u_i + \beta (u_{-i} + \gamma V_{-i})$, call this method \emph{plusV} henceforth), EDTI is, in fact, a kind of \emph{intrinsic reward distribution}. Intuitively, both individual exploration processes of other agents and their interaction with agent $i$ contribute to $V_{-i}$. PlusV does not distinguish between these two effects while EDTI measures the change in expected future (intrinsic) rewards given the state and action of agent $i$.

The difference comes from the counterfactual form of VoI, which is a kind of \emph{difference rewards} that has been frequently used to assign external rewards in the multi-agent deep RL literature~\citep{wolpert2002optimal, foerster2018counterfactual, nguyen2018credit}. This counterfactual formulation helps individual agents filter out the noise in global and other agents' intrinsic reward signals (which include effects from other agents' explorations), and assesses their individual contribution to the gaining of other agents' intrinsic and environmental rewards. In our experiments, we empirically compare the performance of plusV against our methods.

\begin{table} [t]
    \caption{Baseline algorithms. The third column is the reward used to train the value function of PPO. $u_i$ and $u_{cen}$ are curiosity about individual state $s_i$ and global state $\vs$, $T_1=\log\left(p(s'_{\shortn i}|\vs,\va) / p(s'_{\shortn i}|s_{\shortn i},a_{\shortn i})\right)$, $T_2=1-p(s'_{\shortn i}|s_{\shortn i},a_{\shortn i}) / p(s'_{\shortn i}|\vs,\va)$, and $\Delta Q_{\shortn i}(\vs, \va) = Q_{\shortn i}(\vs, \va)-Q_{\shortn i}(s_{\shortn i}, a_{\shortn i})$. Social influence~\citep{jaques2018intrinsic} and COMA~\citep{foerster2018counterfactual} are augmented with curiosity.}
    \label{tab:baselines}
    \centering
    \begin{tabular}{CRCRCRCR}
        \toprule
        \multicolumn{2}{c}{} &
        \multicolumn{2}{l}{Alg.} &
        \multicolumn{2}{l}{Reward} &
        \multicolumn{2}{l}{Description} \\
        \cmidrule(lr){1-2}
        \cmidrule(lr){3-4}
        \cmidrule(lr){5-6}
        \cmidrule(lr){7-8}
        
        \multicolumn{2}{c}{\multirow{2}{*}{Ours}} & \multicolumn{2}{l}{EITI}  & \multicolumn{2}{l}{$r + u_i + \beta T_1$} & \multicolumn{2}{l}{Influence-theoretic influence} \\
        \multicolumn{2}{c}{} & \multicolumn{2}{l}{EDTI}  & \multicolumn{2}{l}{$r + u_i + \beta(u_{\scalebox{0.7}[1.0]{-}i} + \gamma T_2 V_{\scalebox{0.7}[1.0]{-}i})$} & \multicolumn{2}{l}{Decision-theoretic influence} \\
        \midrule
        \multicolumn{2}{c}{\multirow{4}{*}{\makecell{Other\\ Exploration \\ Methods}}} & \multicolumn{2}{l}{random} &  \multicolumn{2}{l}{$r$} & \multicolumn{2}{l}{Pure PPO} \\
        \multicolumn{2}{c}{} & \multicolumn{2}{l}{cen} & \multicolumn{2}{l}{$r + u_{cen}$} & \multicolumn{2}{l}{Decentralized PPO with cen curiosity} \\
        \multicolumn{2}{c}{} & \multicolumn{2}{l}{dec} & \multicolumn{2}{l}{$r + u_i$} & \multicolumn{2}{l}{Decentralized PPO with dec curiosity} \\
        \multicolumn{2}{c}{} & \multicolumn{2}{l}{cen\_control} & \multicolumn{2}{l}{$r + u_{cen}$} & \multicolumn{2}{l}{Centralized PPO with cen curiosity} \\
        \cmidrule(lr){1-2}
        \cmidrule(lr){3-4}
        \cmidrule(lr){5-6}
        \cmidrule(lr){7-8}
        \multicolumn{2}{c}{\multirow{4}{*}{Ablations}} & \multicolumn{2}{l}{r\_influence} & \multicolumn{2}{l}{$r + u_i + \beta u_{\scalebox{0.7}[1.0]{-}i}$} & \multicolumn{2}{l}{Disentangle reward interaction} \\
        \multicolumn{2}{c}{} & \multicolumn{2}{l}{plusV} & \multicolumn{2}{l}{$r + u_i + \beta V_{\scalebox{0.7}[1.0]{-}i}$} & \multicolumn{2}{l}{Use other agents' value functions} \\
        \multicolumn{2}{c}{} & \multicolumn{2}{l}{shared\_critic} & \multicolumn{2}{l}{$r + u_{cen}$} & \multicolumn{2}{l}{PPO with shared $V$ and cen curiosity} \\
        \multicolumn{2}{c}{} & \multicolumn{2}{l}{Q-Q} & \multicolumn{2}{l}{$r + u_i + \beta \Delta Q_{\shortn i}(\vs, \va)$} & \multicolumn{2}{l}{EDTI without explicit counterfactual} \\
        \cmidrule(lr){1-2}
        \cmidrule(lr){3-4}
        \cmidrule(lr){5-6}
        \cmidrule(lr){7-8}
        \multicolumn{2}{c}{\multirow{3}{*}{\makecell{Related\\ Works}}} &  \multicolumn{2}{l}{social} & \multicolumn{2}{l}{---} & \multicolumn{2}{l}{By \citet{jaques2018intrinsic}} \\
        \multicolumn{2}{c}{} & \multicolumn{2}{l}{COMA} & \multicolumn{2}{l}{---} & \multicolumn{2}{l}{By \citet{foerster2018counterfactual}} \\
        \multicolumn{2}{c}{} & \multicolumn{2}{l}{Multi} & \multicolumn{2}{l}{---} & \multicolumn{2}{l}{By \citet{iqbal2019coordinated}} \\
        \toprule
    \end{tabular}
\end{table}
\section{Experimental Results}
Our experiments aim to answer the following questions: (1) Can EITI and EDTI rewards capture interaction points? If they can, how do these points change throughout exploration? (2) Can exploiting these interaction points facilitate exploration and learning performance? (3) Can EDTI filter out interaction points that are not related to environmental rewards? (4) What if only reward influence between agents are disentangled? We evaluate our approach on a set of multi-agent tasks with sparse rewards based on a discrete version of multi-agent particle world environment~\citep{lowe2017multi}. PPO~\citep{schulman2017proximal} is used as the underlying algorithm. For evaluation, all experiments are carried out with 5 different random seeds and results are shown with $95\%$ confidence interval. Demonstrative videos\footnote{\url{https://sites.google.com/view/influence-based-ma-exploration/}} are available online.
        
\textbf{Baselines} We compare our methods with various baselines shown in Table~\ref{tab:baselines}. In particular, we carry out the following ablation studies: i) r\_influence disentangles immediate reward influence between agents, (derivation of the associated augmented reward can be found in Appendix~\ref{appendix:r_influence}. Reward influence in long term is not considered because it inevitably involves transition interactions) ii) PlusV as described in Sec.~\ref{sec:method_discussion}. iii) Shared\_critic uses decentralized PPO agents with shared centralized value function and thus is a cooperative version of MADDPG~\citep{lowe2017multi} augmented with intrinsic reward of curiosity. iv) Q-Q is similar to EDTI but without explicit counterfactual formulation, as described in Sec.~\ref{sec:voi}. We also note that EITI is an ablation of EDTI which considers transition interactions. PlusV, shared\_critic, Q-Q, and cen\_control have access to global or other agents' value functions during training. When execution, all the methods except cen\_control only require local state.
\begin{figure}
    \centering
    \includegraphics[height=0.27\linewidth]{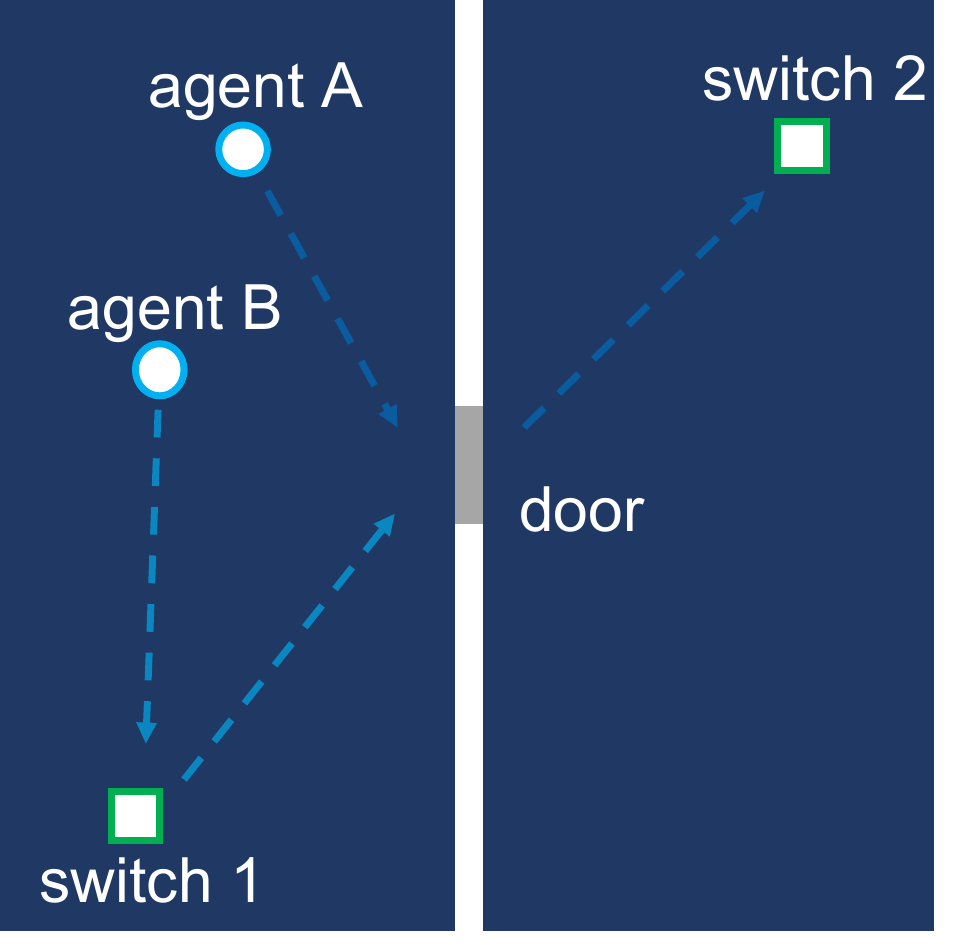}\hfill
    \includegraphics[height=0.27\linewidth]{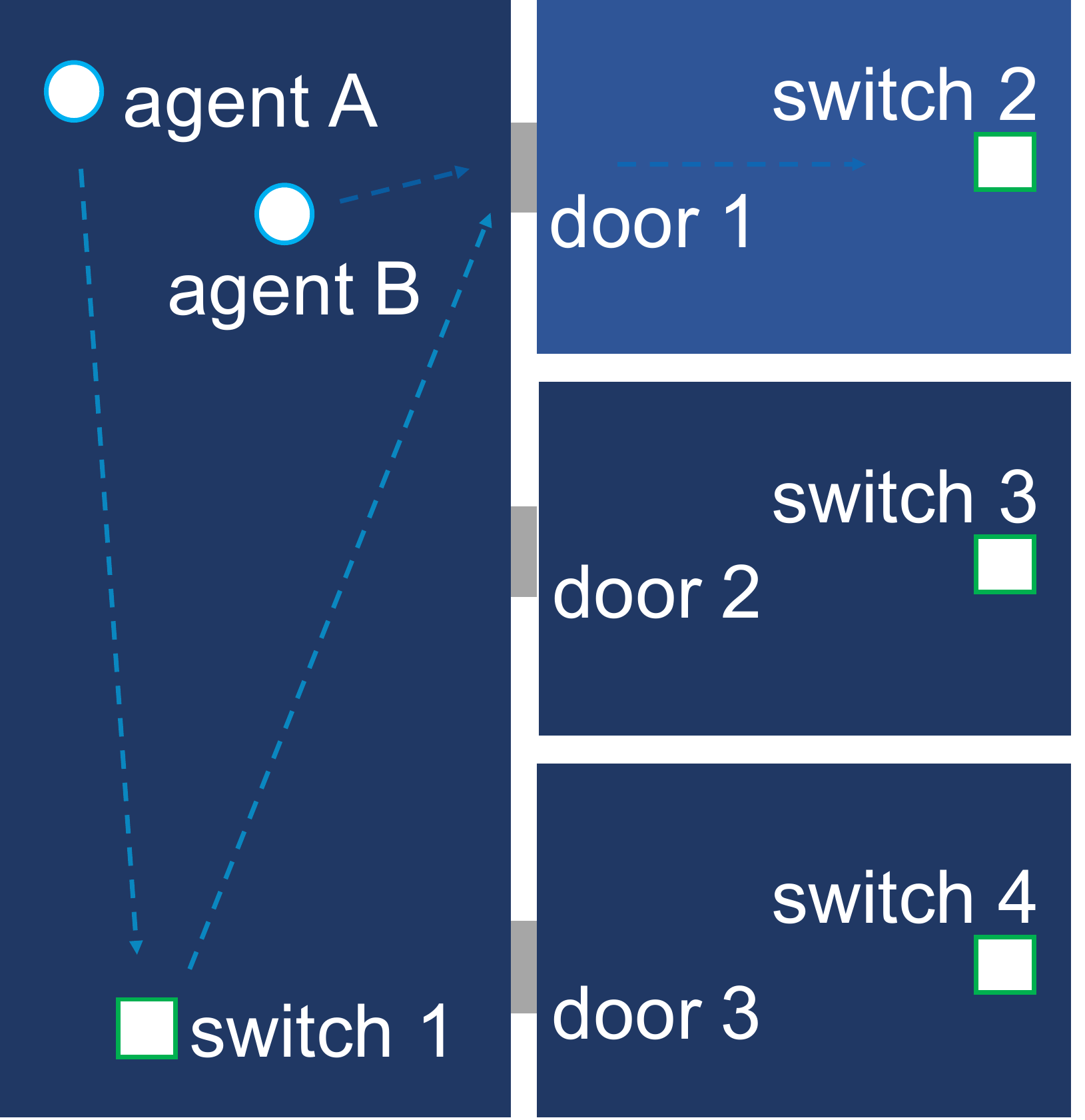}\hfill
    \includegraphics[height=0.27\linewidth]{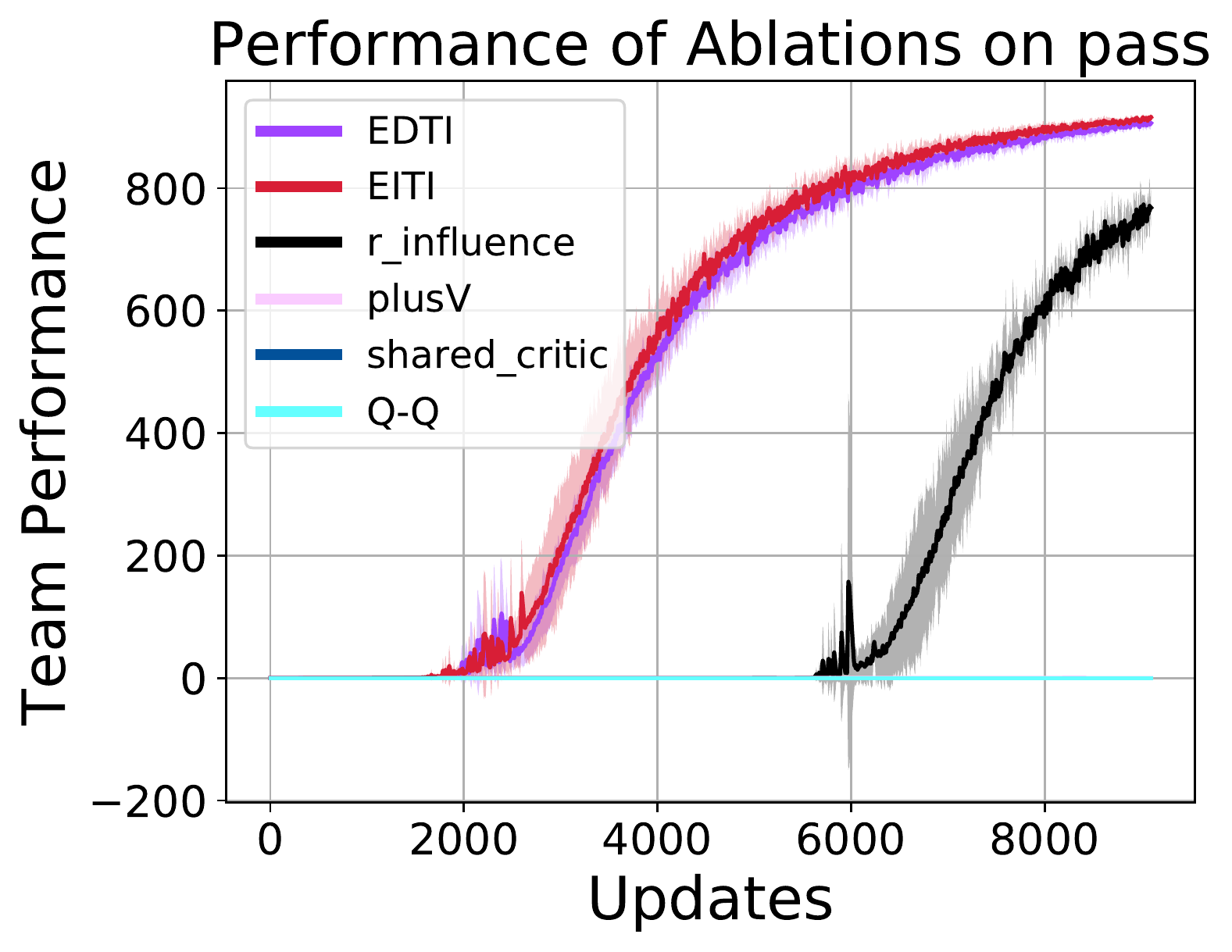}
    \caption{Didactic examples. Left: task \textbf{pass}. Two agents starting at the upper-left corner are only rewarded when both of them reach the other room through the door, which will open only when at least one of the switches is occupied by one or more agents. Middle: \textbf{secret-room}. An extension of pass with 4 rooms and switches. When the switch 1 is occupied, all the three doors turn open. And the three switches on the right only control the door of its room. The agents need to reach the upper right room to achieve any reward. Right: comparison of our methods with ablations on pass.}
    \label{fig:didactic_examples}
\end{figure}
\subsection{Didactic Examples}
We present two didactic examples of multi-agent cooperation tasks with sparse reward to explain how EITI and EDTI work. The first didactic example consists of a $30\times 30$ maze with two rooms and a door with two switches (Fig.~\ref{fig:didactic_examples} left). In the optimal strategy, one agent should first step on switch 1 to help the other agent pass the door, and then the agent that has already reached the right half should further go to switch 2 to bring the remaining agent in. There are two pairs of interaction points in this task: (switch 1, door) and (switch 2, door), \ie, transition probability of the agent near door is determined by whether another agent is on one of the switch.

Fig.~\ref{fig:didactic_examples}-right and Fig.~\ref{fig:pass}-top show the learning curves of our methods and all the baselines, among which EITI, EDTI, r\_influence, Multi, and centralized control can learn the winning strategy and ours learn much more efficiently. Fig.~\ref{fig:pass}-bottom gives a possible explanation why our methods work. EITI and EDTI rewards successfully highlight the interaction points (before 100 and 2100 updates, respectively). Agents are encouraged to explore these configurations more frequently and thus have better chance to learn the goal strategy. EDTI reward considers the value function of the other agent, so it converges slower than the EITI reward. In contrast, directly adding the other agent's intrinsic rewards and value functions is noisy (see "plusV reward") and confuses the agent because these contain the effect of the other agent's exploration. As for centralized control, global curiosity encourages agents to try all possible configurations, so it can find environmental rewards in most tasks. However, visiting all configurations without bias renders it inefficient -- external rewards begin to dominate the behaviors of agents after 7000 updates even with the help of centralized learning algorithm. Our methods use the same information as centralized exploration but take advantages of agents' interactions to accelerate exploration.

In order to evaluate whether EDTI can help filter out noisy interaction points and accelerate exploration, we conduct experiments in a second didactic task (see Fig.~\ref{fig:didactic_examples} middle). It is also a navigation task in a $25\times 25$ maze where agents are rewarded for being in a goal room. However, in this experiment, we consider a case where there are four rooms and the upper right one is attached to reward. This task contains $6$ pairs of interaction points (switch 1 with each of the doors, each switch with the door of the same room), but only two of them are related to external rewards, \ie, (switch 1, door 1) and (switch 2, door 1). As Fig.~\ref{fig:threepass}-right shows, EITI agents treat three doors equally even after 7400 updates (see Fig.~\ref{fig:threepass} right, 7400 updates, top row). In comparison, although EDTI reward suffers from noise in the beginning, it clearly highlight two pairs of valuable interaction points (see Fig.~\ref{fig:threepass} right, 7400 updates, bottom row) as intrinsic reward diminishes. This can explain why EDTI outperforms EITI (Fig.~\ref{fig:threepass} left).
\begin{figure}
    \centering
    \includegraphics[width=\linewidth]{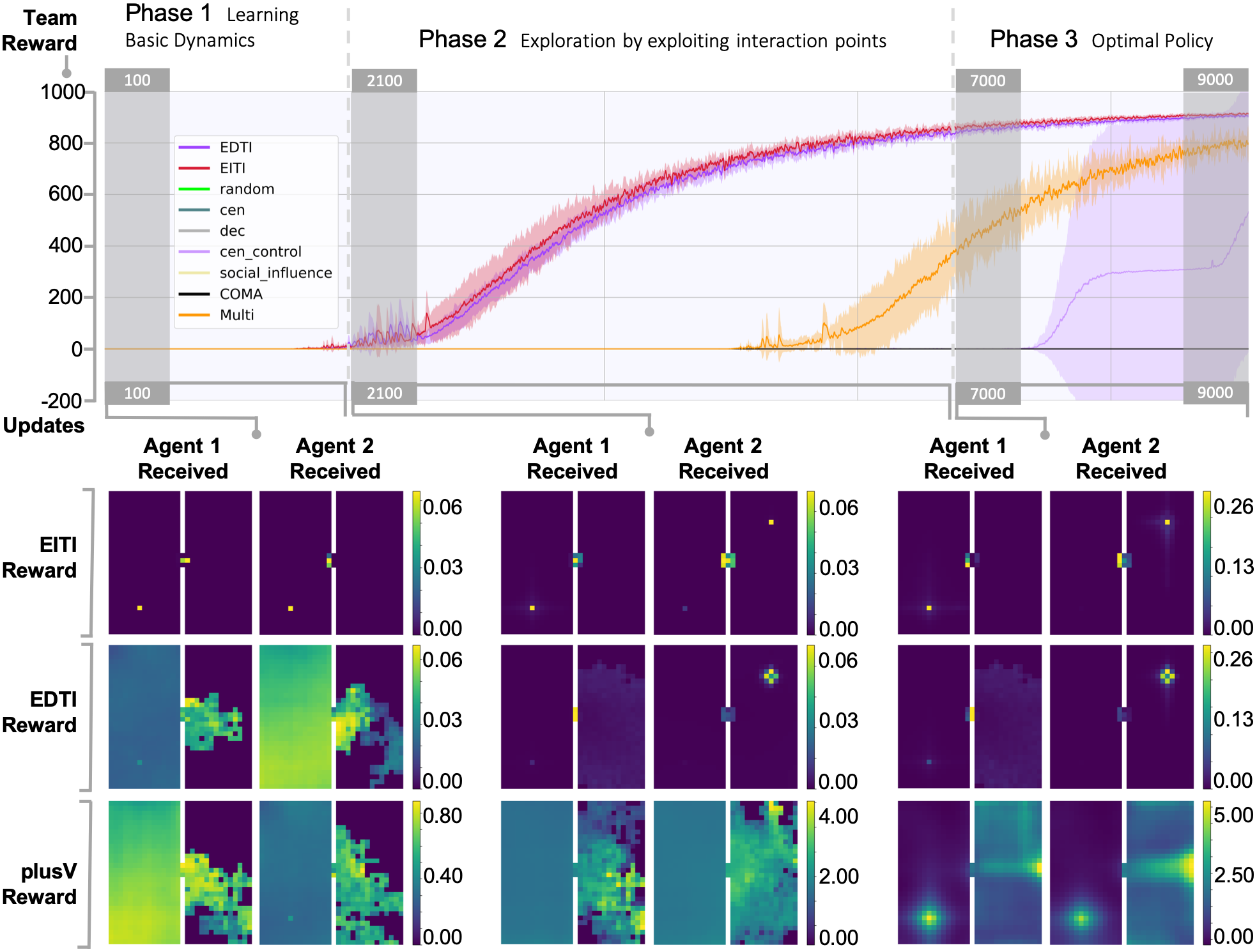}
    \caption{Development of performance of our methods compared to baselines and intrinsic reward terms of EITI, EDTI, and plusV over the training period of 9000 PPO updates segmented into three phases. "Team Reward" shows averaged team reward gained in a episode, with a maximum of 1000. It shows that only EITI, EDTI, and centralized control and Multi can learn the strategy during this stage. "EITI reward", "EDTI reward", and "plusV reward" demonstrate the evolving of corresponding intrinsic rewards.}
    \label{fig:pass}
\end{figure}
\begin{figure}
    \centering
    \includegraphics[height=0.22\linewidth]{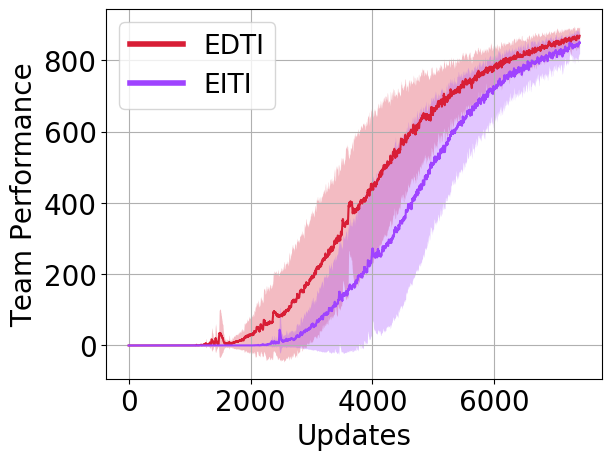}\hfill
    \includegraphics[height=0.22\linewidth]{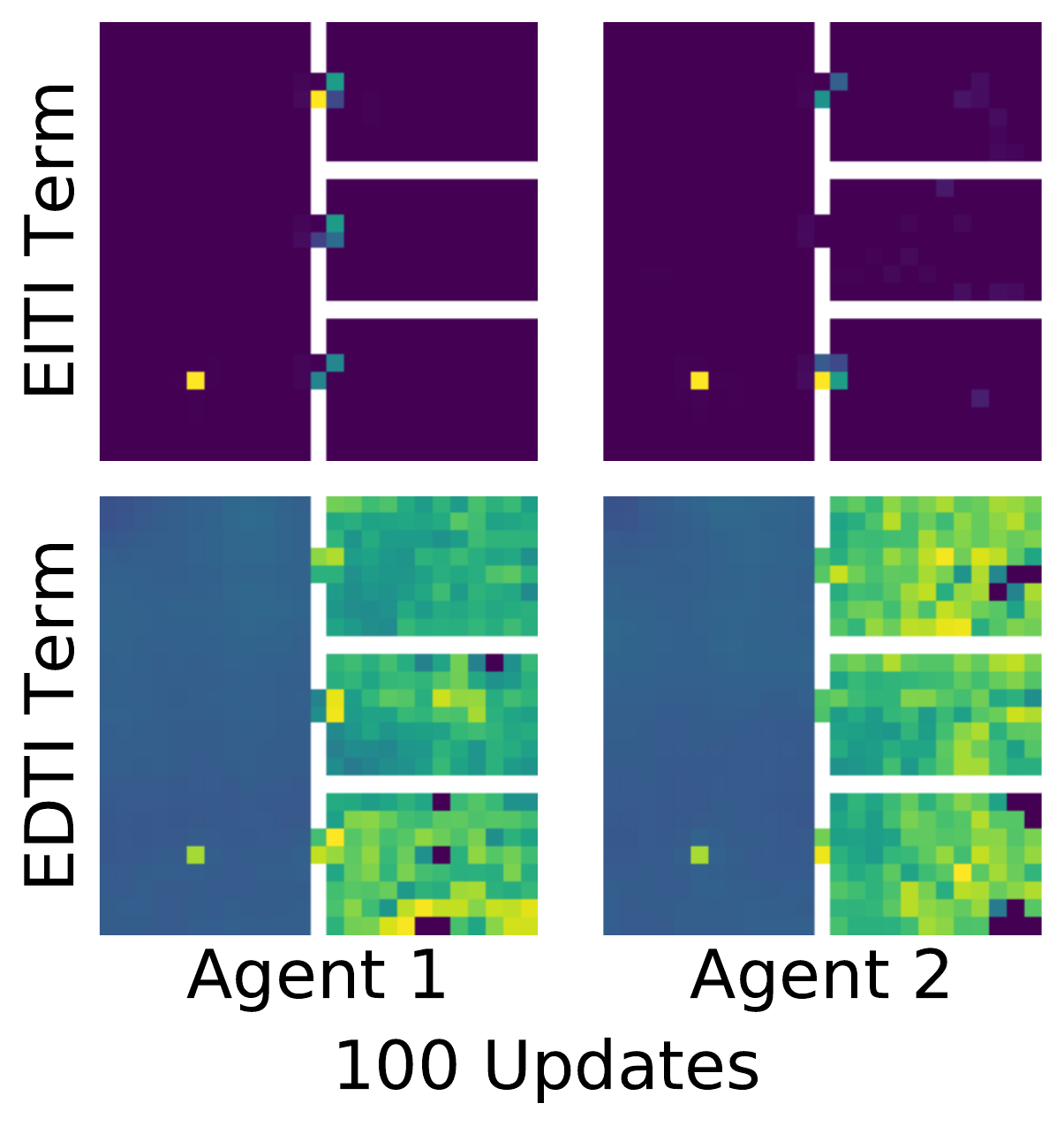}\hfill
    \includegraphics[height=0.22\linewidth]{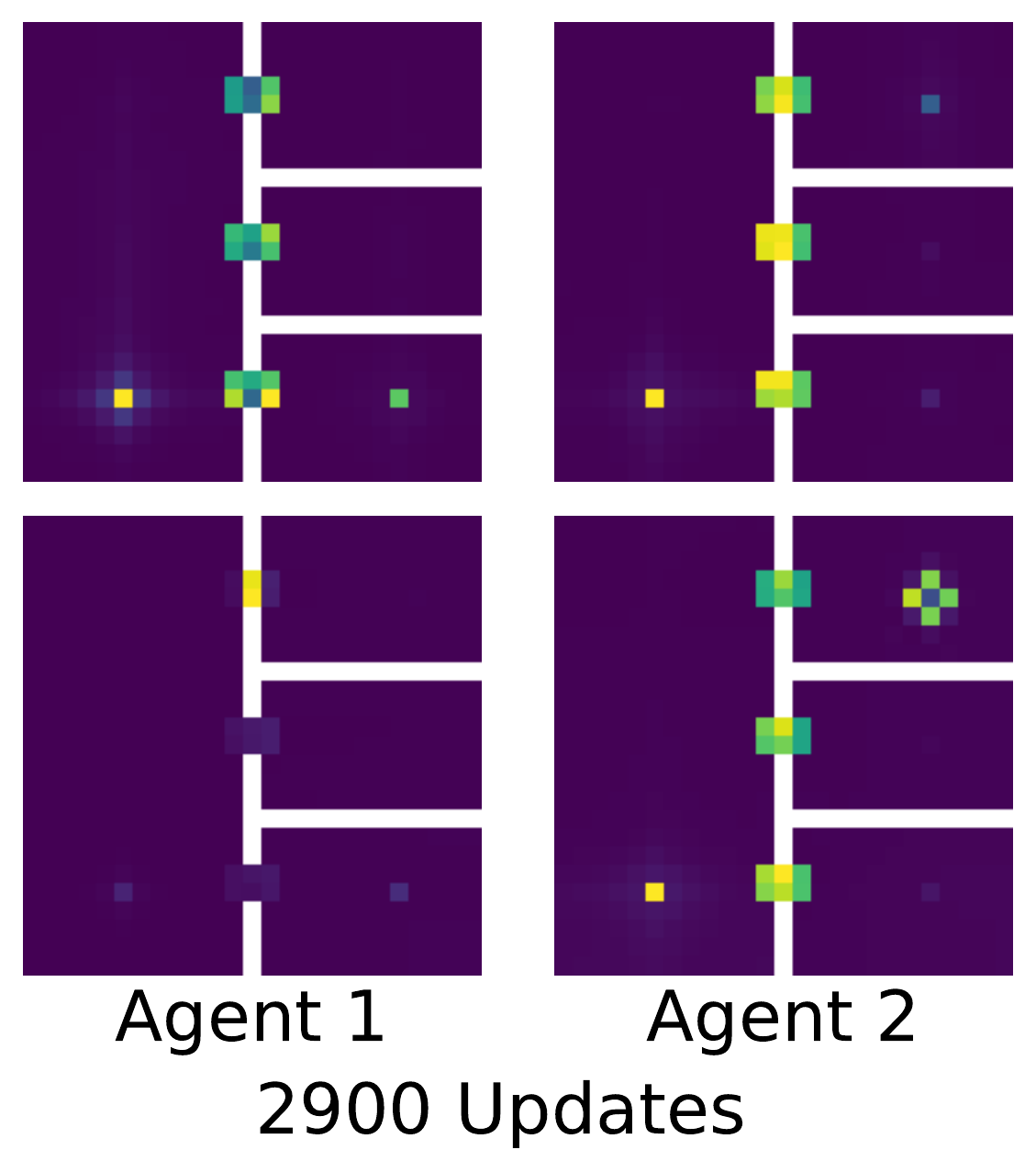}\hfill
    \includegraphics[height=0.22\linewidth]{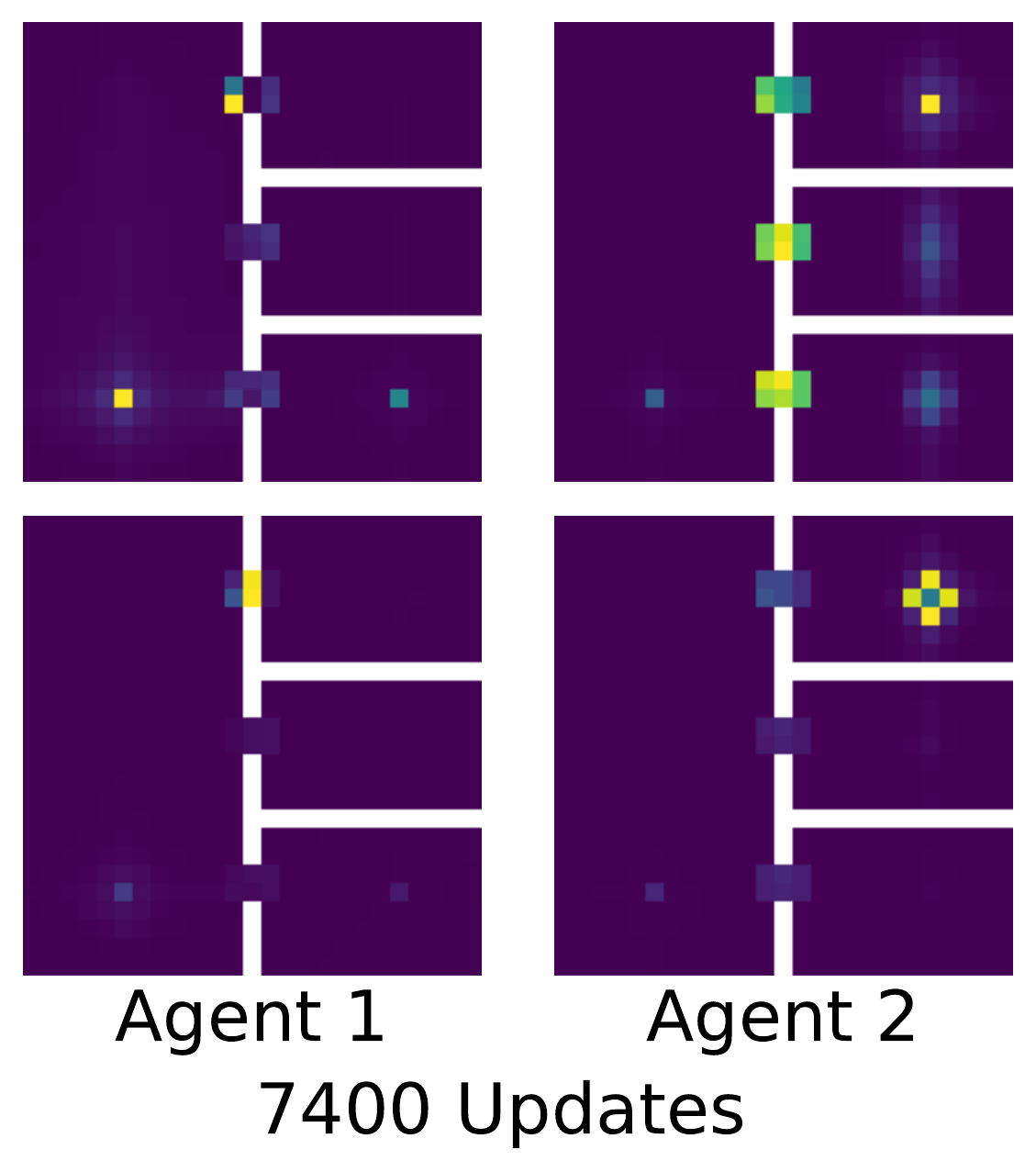}
    \includegraphics[height=0.22\linewidth]{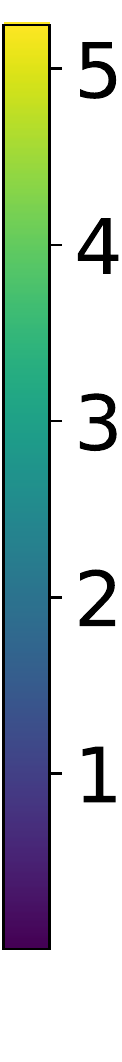}
    \caption{Left: performance comparison between EDTI and EITI on secret-room over 7400 PPO updates. Right: EITI and EDTI terms of two agents after 100, 2900, and 7400 updates.}
    \label{fig:threepass}
\end{figure}
\begin{figure}
    \centering
    \includegraphics[height=0.22\linewidth]{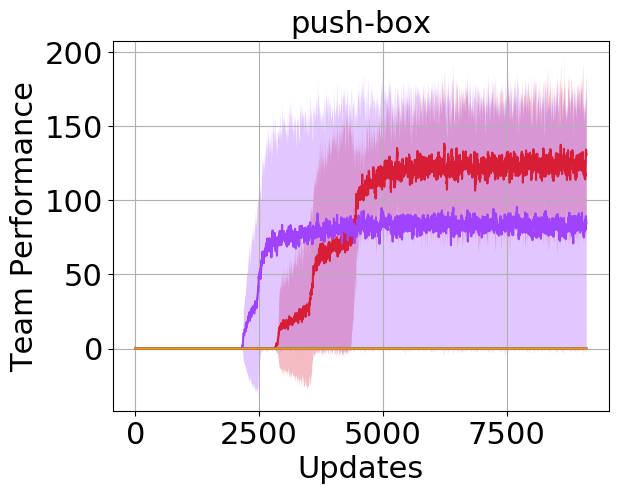}\hfill
    \includegraphics[height=0.22\linewidth]{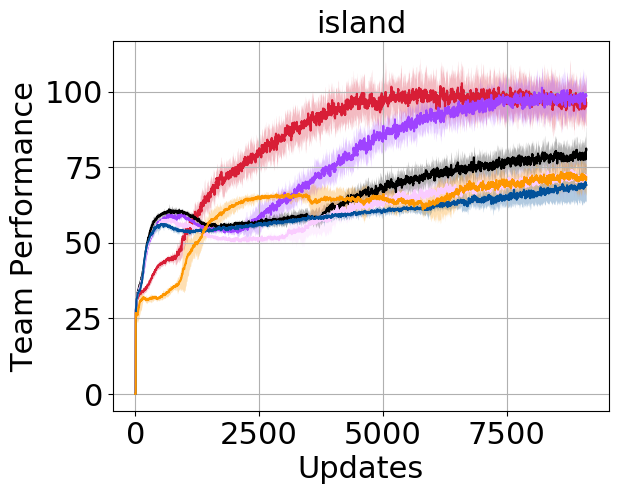}\hfill
    \includegraphics[height=0.22\linewidth]{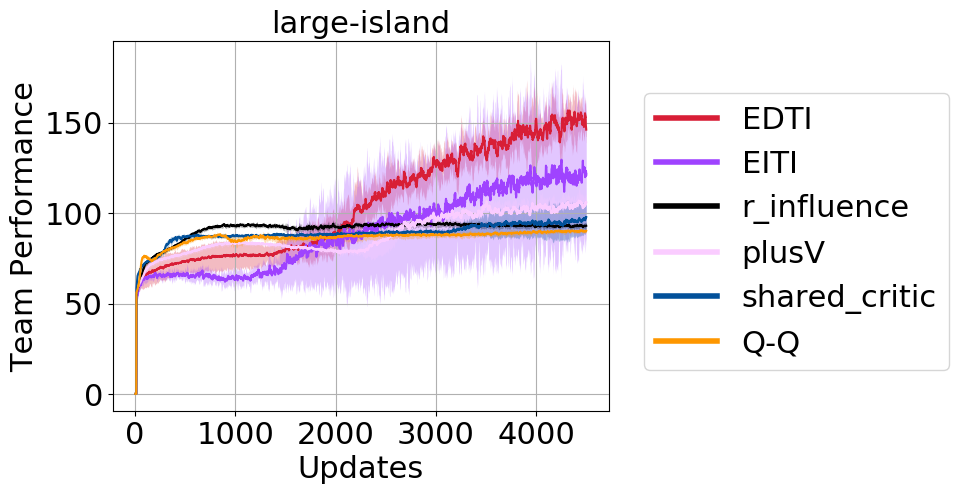}
    \caption{Comparison of our methods against ablations for push-box, island, and large-island. Comparison with baselines is shown in Fig.~\ref{fig:push_island_large_island} in Appendix~\ref{appendix:implementation_details}.}
    \label{fig:push_island_large_island_ablations}
\end{figure}
\subsection{Exploration in Complex Tasks}
Next, we evaluate the performance of our methods on more complex tasks. To this end, we use three sparse reward cooperative multi-agent tasks depicted in Fig.~\ref{fig:more_env} of Appendix~\ref{appendix:implementation_details} and analyzed below. Details of implementation and experiment settings are also described in Appendix~\ref{appendix:implementation_details}.

\textbf{Push-box:} A $15\times 15$ room is populated with 2 agents and 1 box. Agents need to push the box to the wall in 300 environment steps to get a reward of 1000. However, the box is so heavy that only when two agents push it in the same direction at the same time can it be moved a grid. Agents need to coordinate their positions and actions for multiple steps to earn a reward. The purpose of this task is to demonstrate that EITI and EDTI can explore long-term cooperative strategy.

\textbf{Island:} This task is a modified version of the classic Stag Hunt game~\citep{peysakhovich2018prosocial} where two agents roam a $10\times 10$ island populated with 9 treasures and a random walking beast for 300 environment steps. Agents can collect a treasure by stepping on it to get a team reward of 10 or, by attacking the beast within their attack range, capture it for a reward of 300. The beast would also attack the agents when they are too close. The beast and agent have a maximum energy of 8 and 5 respectively, which will be subtracted by 1 every time attacked. Therefore, an agent is too weak to beat the beast alone and they have to cooperate. In order to learn optimal strategy in this task, one method has to keep exploring after sub-optimal external rewards are found.

\textbf{Large-island:} Similar to island but with more agents (4), more treasures (16), and a beast with more energy (16) and a higher reward (600) for being caught. This task aims to demonstrate feasibility of our methods in cases with more than 2 agents.

Push-box requires agents to take coordinated actions at certain positions for multiple steps to get rewarded. Therefore, this task is particularly challenging and all the baselines struggle to earn any reward (Fig.~\ref{fig:push_island_large_island_ablations} left and Fig.~\ref{fig:push_island_large_island} left). Our methods are considerably more successful because interaction happens when the box is moved -- agents remain unmoved when they push the box alone but will move by a grid if push it together. In this way, EITI and EDTI agents are rewarded intrinsically to move the box and thus are able to quickly find the optimal policy.

In the island task, collecting treasures is a easily-attainable local optimal. However, efficient treasures collecting requires the agents to spread on the island. This leads to a situation where attempting to attack the beast seems a bad choice since it is highly possible that agents will be exposed to the beast's attack alone. They have to give up profitable spreading strategy and take the risk of being killed to discover that if they attack the beast collectively for several timesteps, they will get much more rewards. Our methods help solve this challenge by giving agents intrinsic incentives to appear together in the attack range of the beast, where they have indirect interactions (health is part of the state and it decreases slower when the two are attacked alternatively). Fig.~\ref{fig:island_details} in Appendix~\ref{appendix:implementation_details} demonstrates that our methods learn to catch the beast quickly, and thus have better performance (Fig.~\ref{fig:push_island_large_island} right).

Finally, outperformance of our methods on large-island proves that they can successfully handle cases with more than two agents.

In summary, both of our methods are able to facilitate effective exploration on all the tasks by exploiting interactions. EITI outperforms EDTI in scenarios where all interaction points align with extrinsic rewards. On other tasks, EDTI performs better than EITI due to its ability to filter out interaction points that can not lead to more values.

We also study EDTI with only intrinsic rewards, discussion and results are included in Appendix~\ref{appendix:voi_int}.

\section{Closing Remarks}
In this paper, we study the multi-agent exploration problem and propose two influence-based methods that exploits the interaction structure. These methods are based on two interaction measures, MI and \emph{Value of Interaction} (VoI), which respectively measure the amount and value of one agent's influence on the other agents' exploration processes. These two measures can be best regraded as exploration bonus distribution. We also propose an optimization method in the policy gradient framework, which enables agents to achieve coordinated exploration in a decentralized manner and optimize team performance.

\bibliography{iclr2020_conference}
\bibliographystyle{iclr2020_conference}

\newpage
\appendix
\section*{Appendix}

\section{Intrinsic EDTI}\label{appendix:voi_int}
Value of interaction (VoI) captures both transition and reward influence among agents, and it facilitates coordinated exploration by encouraging interactions. VoI contains influence of both intrinsic and extrinsic rewards. Since single-agent literature has studied purely curiosity-driven learning and gets cutting-edge performance~\citep{burda2018large}, it is interesting to investigate the performance of VoI given only intrinsic rewards.

Intuitively, intrinsic VoI distributes individual curiosity among team members and facilitates exploration by encouraging agents to help each other to reach under-explored states. Specifically, we use the following objective:
\begin{equation}
    \begin{aligned}
    J_{\theta_i}[\pi_{i} | \pi_{-i}, p_0] \equiv V^{ext, \bm{\pi}}(\vs_0) + V_i^{int, \bm{\pi}}(\vs_0) + \beta \cdot  VoI_{-i|i}^{int, \bm{\pi}}.
    \end{aligned}
\end{equation}
The corresponding augmented reward is:
\begin{equation}\label{equ:edti_intrinsic_reward}
\hat{r}_1^t = r_t + u_1^t + \beta \left[u_2^t + \gamma \left(1 - \frac{p^-(s_2^{t+1} | s_2^t, a_2^t)}{p(s_2^{t+1}|s_1^t,s_2^t,a_1^t,a_2^t)}\right) V^{int, -}_2(s_1^{t+1}, s_2^{t+1}) \right]
\end{equation}
We use this method (intrinsic EDTI) to train the agents on pass, secret-room, push-box, and island and show the results in Fig.~\ref{fig:intrinsic_edti}.

\subsection{Task Structure}

\section{Mathematical Details}
\subsection{Gradient of Mutual Information} \label{appendix:mi}
To encourage agents to exert influence on transitions of other agents, we optimize mutual information between agent's trajectories. In particular, in the following, we show that term 2 in Eq.~\ref{equ:first_step_of_gradient_of_mi} is always zero.

\begin{eqnarray}
T2 &=& \sum_{\vs,\va,s_2' \in(S, A, S_2)} p^{\bm{\pi}}(\vs,\va,s_2') \nabla_{\theta_1} \log\frac{p(s_2'|\vs,\va)} {p^{\bm{\pi}}(s_2'|s_2,a_2)} \\
&=& -\sum_{\vs,\va,s_2'} p^{\bm{\pi}}(\vs,\va,s_2') \nabla_{\theta_1}\log{p^{\bm{\pi}}(s_2'|s_2,a_2)} \\
&=& -\sum_{\vs,\va,s_2'} p^{\bm{\pi}}(\vs,\va,s_2') \frac{\nabla_{\theta_1}(p^{\bm{\pi}}(s_2'|s_2,a_2))}{p^{\bm{\pi}}(s_2'|s_2,a_2)} \\
&=& -\sum_{\vs,\va,s_2'} \frac{p^{\bm{\pi}}(\vs,\va,s_2')}{p^{\bm{\pi}}(s_2'|s_2,a_2)} \nabla_{\theta_1}\left(\sum_{s_1^*, a_1^*}p(s_2'|s_2,a_2, s_1^*, a_1^*)p(s_1^* | s_2, a_2)\pi_{\theta_1}(a_1^*|s_1^*)\right) \\
&=& -\sum_{\vs,\va,s_2'} \frac{p^{\bm{\pi}}(\vs,\va,s_2')}{p^{\bm{\pi}}(s_2'|s_2,a_2)} \sum_{s_1^*, a_1^*}p(s_2'|s_2,a_2, s_1^*, a_1^*)p(s_1^* | s_2, a_2)\nabla_{\theta_1}\pi_{\theta_1}(a_1^*|s_1^*)\\
&=& -\sum_{s_2,a_2,s_2'} \frac{p^{\bm{\pi}}(s_2,a_2,s_2')}{p^{\bm{\pi}}(s_2'|s_2,a_2)} \sum_{s_1^*, a_1^*}p(s_2'|s_2,a_2, s_1^*, a_1^*)p(s_1^* | s_2, a_2)\nabla_{\theta_1}\pi_{\theta_1}(a_1^*|s_1^*)  \\
&=& -\sum_{s_2,a_2,s_2'} p^{\bm{\pi}}(s_2,a_2) \sum_{s_1^*, a_1^*}p(s_2'|s_2,a_2, s_1^*, a_1^*)p(s_1^* | s_2, a_2)\nabla_{\theta_1}\pi_{\theta_1}(a_1^*|s_1^*)  \\
&=& -\sum_{s_2,a_2} p^{\bm{\pi}}(s_2,a_2) \sum_{s_1^*, a_1^*}p(s_1^* | s_2, a_2)\nabla_{\theta_1}\pi_{\theta_1}(a_1^*|s_1^*) \sum_{s_2'}p(s_2'|s_2,a_2, s_1^*, a_1^*) \\
&=& -\sum_{s_2,a_2} p^{\bm{\pi}}(s_2,a_2) \sum_{s_1^*, a_1^*}p(s_1^* | s_2, a_2)\nabla_{\theta_1}\pi_{\theta_1}(a_1^*|s_1^*) \underbrace{\sum_{s_2'}p(s_2'|s_2,a_2, s_1^*, a_1^*)}_{= 1} \\
&=& -\sum_{s_2,a_2} p^{\bm{\pi}}(s_2,a_2) \sum_{s_1^*}p(s_1^* | s_2, a_2)\nabla_{\theta_1}\sum_{a_1^*}\pi_{\theta_1}(a_1^*|s_1^*) \\
&=& -\sum_{s_2,a_2} p^{\bm{\pi}}(s_2,a_2) \sum_{s_1^*}p(s_1^* | s_2, a_2)\nabla_{\theta_1}1 \\
&=& 0
\end{eqnarray}

\begin{figure}
    \centering
    \includegraphics[width=0.49\linewidth]{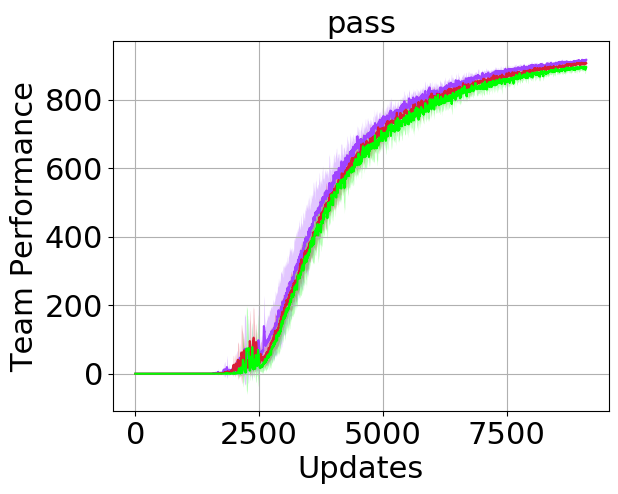}\label{fig:voi_int_pass}\hfill
    \includegraphics[width=0.49\linewidth]{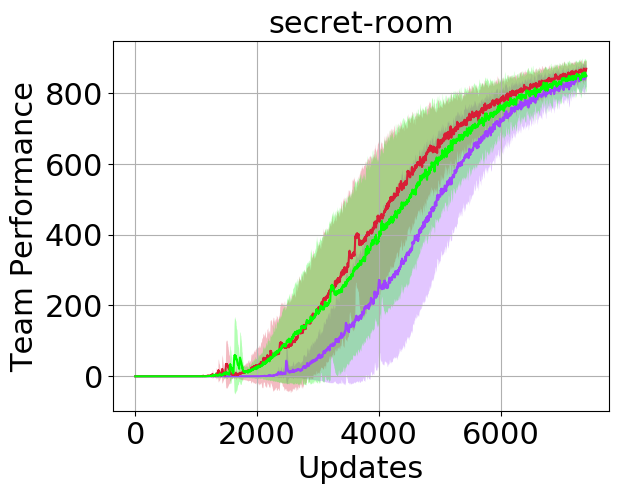}\label{fig:voi_int_threepass}\\
    \includegraphics[width=0.49\linewidth]{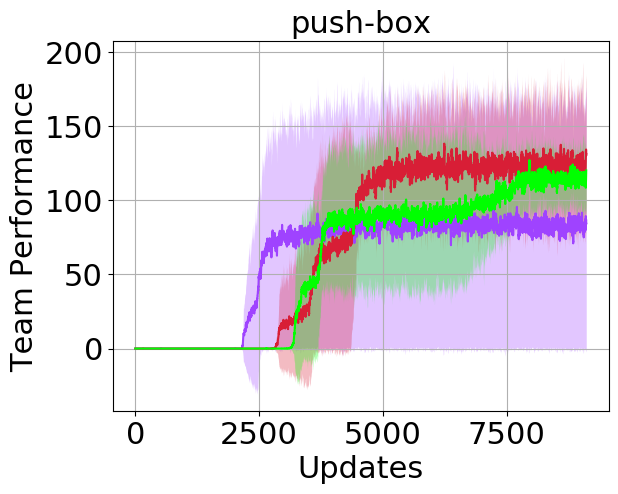}\label{fig:voi_int_pushball}\hfill
    \includegraphics[width=0.49\linewidth]{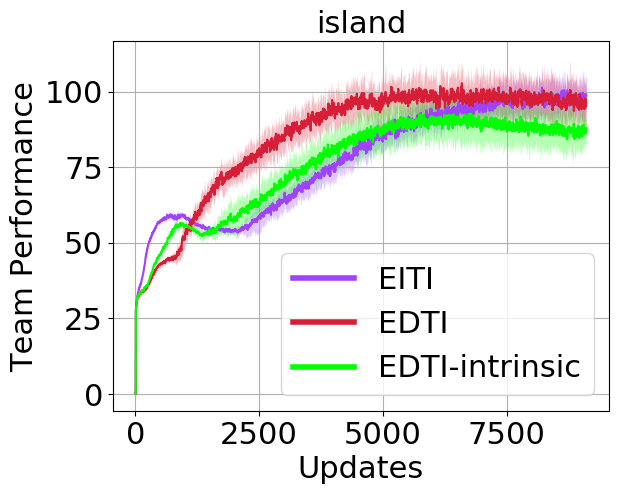}\label{fig:voi_int_island}
    \caption{Performance of intrinsic EDTI in comparison with EITI and EDTI on pass, secret-room, push-box, and island.}
    \label{fig:intrinsic_edti}
\end{figure}

\subsection{Definition of \emph{Value of Interaction}}\label{appendix:voi_definition_proof}
To capture both transition and reward interactions between agents and thereby achieve intrinsic reward distribution, we propose a decision-theoretic measure called \emph{Value of Interaction}. We start from 2-agent cases and the following theorem gives the definition of $VoI_{2|1}$ in the form of an expectation over trajectories, which is especially helpful in the derivation of the EDTI policy gradient update shown Eq.~\ref{equ:edti_gradient}.
\begin{restatable}{Theorem}{thovoi}
Value of Interaction of agent 1 on agent 2 is:
    \begin{equation}
    \begin{aligned}
        VoI_{2|1}^{\bm{\pi}}(S_2';S_1,A_1|S_2,A_2) = \mathbb{E}_\tau\left[\tilde{r}_2(\bm{s}, \bm{a}) - \tilde{r}_2^{\bm{\pi}}(s_2, a_2)
        + \gamma \left(1 - \frac{p^{\bm{\pi}}(s_2' | s_2, a_2)}{p(s_2' |\bm{s}, \bm{a})}\right) V_2^{\bm{\pi}}(\bm{s}')\right],
    \end{aligned}
    \end{equation}
        where $\tilde{r}_2^{\bm{\pi}}(s_2, a_2)$ is the counterfactual immediate reward.
\label{tho:voi}\end{restatable}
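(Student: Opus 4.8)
The plan is to start directly from the definition of VoI in Eq.~\ref{equ:original_form_of_VoI}, substitute the expansions of $Q_2^{\bm{\pi}}(\vs,\va,s_2')$ and $Q_{2|1}^{\bm{\pi},*}(s_2,a_2,s_2')$, and split the resulting sum into an \emph{immediate-reward} part and a \emph{discounted future-value} part, handling each separately. Throughout I treat $p^{\bm{\pi}}(\vs,\va,s_2')$ as the visitation measure for which $\sum_{\vs,\va,s_2'}p^{\bm{\pi}}(\vs,\va,s_2')[\,\cdot\,]=\mathbb{E}_\tau[\,\cdot\,]$, and I use the two elementary factorizations $p^{\bm{\pi}}(\vs,\va,s_2')=p^{\bm{\pi}}(\vs,\va)\,p(s_2'\mid\vs,\va)$ and $p(s_2'\mid\vs,\va)\,p(s_1'\mid\vs,\va,s_2')=T(\vs'\mid\vs,\va)$, where $\vs'=(s_1',s_2')$.

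For the immediate-reward part, the integrand $\tilde{r}_2(\vs,\va)-\tilde{r}_2^{\bm{\pi}}(s_2,a_2)$ does not depend on $s_2'$, so summing $p^{\bm{\pi}}(\vs,\va,s_2')$ over $s_2'$ collapses it to $p^{\bm{\pi}}(\vs,\va)$ and reproduces $\mathbb{E}_\tau[\tilde{r}_2(\vs,\va)-\tilde{r}_2^{\bm{\pi}}(s_2,a_2)]$. Here $\tilde{r}_2^{\bm{\pi}}(s_2,a_2)=\sum_{s_1^*,a_1^*}p^{\bm{\pi}}(s_1^*,a_1^*\mid s_2,a_2)\,\tilde{r}_2(s_1^*,s_2,a_1^*,a_2)$ is read off from the first bracketed term of $Q_{2|1}^{\bm{\pi},*}$, which is exactly the counterfactual immediate reward named in the statement.

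For the future-value part, the contribution coming from $Q_2^{\bm{\pi}}$ is $\gamma\sum_{\vs,\va,s_2'}p^{\bm{\pi}}(\vs,\va,s_2')\sum_{s_1'}p(s_1'\mid\vs,\va,s_2')V_2^{\bm{\pi}}(\vs')$; applying the two factorizations turns the inner sum into $\sum_{\vs'}T(\vs'\mid\vs,\va)V_2^{\bm{\pi}}(\vs')$, i.e. exactly $\gamma\,\mathbb{E}_\tau[V_2^{\bm{\pi}}(\vs')]$, which is the ``$1\cdot V_2^{\bm{\pi}}(\vs')$'' piece of the target. It remains to match the contribution coming from the counterfactual $Q_{2|1}^{\bm{\pi},*}$ against the importance-ratio piece $-\gamma\,\mathbb{E}_\tau\big[\tfrac{p^{\bm{\pi}}(s_2'\mid s_2,a_2)}{p(s_2'\mid\vs,\va)}V_2^{\bm{\pi}}(\vs')\big]$, and this is the step I expect to be the main obstacle, because the two expressions sum over genuinely different objects: the counterfactual term integrates a hypothetical $(s_1^*,a_1^*)$ drawn from $p^{\bm{\pi}}(\cdot\mid s_2,a_2)$, whereas the ratio term integrates the \emph{actual} $(s_1,a_1)$ visited along the trajectory.

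The resolution is a marginalize-then-relabel argument. On the VoI side, the counterfactual summand depends only on $(s_2,a_2,s_2')$, so I first sum $p^{\bm{\pi}}(\vs,\va,s_2')$ over the actual $(s_1,a_1)$ to obtain $p^{\bm{\pi}}(s_2,a_2,s_2')=p^{\bm{\pi}}(s_2,a_2)\,p^{\bm{\pi}}(s_2'\mid s_2,a_2)$. On the target side, I expand the ratio using $p(s_1'\mid\vs,\va,s_2')=p(s_1',s_2'\mid\vs,\va)/p(s_2'\mid\vs,\va)$ so that the $p(s_2'\mid\vs,\va)$ factors cancel, and then factor the sampling weight as $p^{\bm{\pi}}(\vs,\va)=p^{\bm{\pi}}(s_2,a_2)\,p^{\bm{\pi}}(s_1,a_1\mid s_2,a_2)$. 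Renaming the dummy variables $(s_1,a_1)\mapsto(s_1^*,a_1^*)$ then makes the two multiple sums coincide term by term, the actual agent-1 state-action simply playing the role of the counterfactual $(s_1^*,a_1^*)$. Subtracting this from the $Q_2^{\bm{\pi}}$ piece gives the combined factor $\gamma(1-p^{\bm{\pi}}(s_2'\mid s_2,a_2)/p(s_2'\mid\vs,\va))V_2^{\bm{\pi}}(\vs')$, and adding back the immediate-reward part yields the claimed expression. Once the bookkeeping of conditional factorizations is set up correctly the identity is purely mechanical; the only genuine idea is recognizing that on-policy sampling of $(s_1,a_1)$ reproduces the counterfactual marginalization.
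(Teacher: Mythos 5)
Your proposal is correct and follows essentially the same route as the paper: the same split into an immediate-reward part and a discounted future-value part, with the crux being that the counterfactual marginalization over $(s_1^*,a_1^*)\sim p^{\bm{\pi}}(\cdot\mid s_2,a_2)$ coincides with on-policy sampling of $(s_1,a_1)$. Your ``marginalize-then-relabel'' step is precisely the content of the paper's Lemma~\ref{lemma:1}, which carries the double sum over $(s_1,a_1)$ and $(s_1^*,a_1^*)$ and swaps the dummy variables to produce the ratio $p^{\bm{\pi}}(s_2'\mid s_2,a_2)/p(s_2'\mid\vs,\va)$.
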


$VoI_{2|1}$ can be defined similarly. To lighten notation in the proof, we define
\begin{eqnarray}
    V_2^{\bm{\pi}}(s_2' | s_1, s_2, a_1, a_2) = \sum_{s_1'} p(s_1' | s_1, s_2, a_1, a_2, s_2') V_2^{\bm{\pi}}(s_1', s_2') \\
	\tilde{r}_2^{\bm{\pi}}(s_2, a_2) = \sum_{s_1^*, a_1^*} p^{\bm{\pi}}(s_1^*, a_1^* | s_2,  a_2) \tilde{r}_2(s_1^*, s_2,  a_1^*, a_2), \\
	V_2^{\bm{\pi}, *}(s_2' | s_2,  a_2) = \sum_{s_1^*, a_1^*} p^{\bm{\pi}}(s_1^*, a_1^* | s_2,  a_2) \sum_{s_1'} p(s_1' | s_1^*, s_2, a_1^*, a_2, s_2') V_2^{\bm{\pi}}(s_1', s_2').
\end{eqnarray}
We first prove Lemma \ref{lemma:1}, which is used in the proof of Theorem \ref{tho:voi}.
\begin{lemma}\label{lemma:1}
	\begin{eqnarray}
	&& \sum_{s_1, s_2, a_1, a_2}p^{\bm{\pi}}(s_1, s_2,  a_1, a_2) \gamma \sum_{s_2'} p(s_2' |s_1, s_2, a_1, a_2) V_2^{\bm{\pi}}(s_2' | s_2, a_2) \\
	&=& \sum_{s_1, s_2, a_1, a_2}p^{\bm{\pi}}(s_1, s_2,  a_1, a_2) \gamma \sum_{s_1', s_2'} T(s_1', s_2' |s_1, s_2,  a_1, a_2) \cdot \frac{p^{\bm{\pi}}(s_2' | s_2,  a_2)}{p(s_2' | s_1, s_2,  a_1, a_2)} V_2^{\bm{\pi}}(s_1', s_2').\nonumber
	\end{eqnarray}
\end{lemma}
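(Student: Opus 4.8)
The plan is to read the factor $V_2^{\bm{\pi}}(s_2' \mid s_2, a_2)$ appearing on the left-hand side as the counterfactual future value $V_2^{\bm{\pi}, *}(s_2' \mid s_2, a_2)$ defined just above the lemma, i.e. the expected future value with agent $1$'s state and action marginalized out of the conditioning set; among the three displayed definitions this is the only object depending on $(s_2, a_2, s_2')$ alone. The observation that drives the whole argument is precisely that this factor is independent of $(s_1, a_1)$, so the inner sum over $(s_1, a_1)$ on the left can be performed against the remaining $(s_1,a_1)$-dependent factors before the definition of $V_2^{\bm{\pi}, *}$ is ever unfolded.

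Concretely, first I would reorder the summation to isolate $\sum_{s_1, a_1} p^{\bm{\pi}}(s_1, s_2, a_1, a_2)\, p(s_2' \mid s_1, s_2, a_1, a_2)$, pulling the $\gamma$ and the $V_2^{\bm{\pi}, *}(s_2' \mid s_2, a_2)$ factor outside. Factoring $p^{\bm{\pi}}(s_1, s_2, a_1, a_2) = p^{\bm{\pi}}(s_2, a_2)\, p^{\bm{\pi}}(s_1, a_1 \mid s_2, a_2)$ and invoking the definition of the marginal (counterfactual) transition $p^{\bm{\pi}}(s_2' \mid s_2, a_2) = \sum_{s_1, a_1} p^{\bm{\pi}}(s_1, a_1 \mid s_2, a_2)\, p(s_2' \mid s_1, s_2, a_1, a_2)$, this inner sum collapses to $p^{\bm{\pi}}(s_2, a_2)\, p^{\bm{\pi}}(s_2' \mid s_2, a_2)$. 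Thus the left-hand side becomes $\gamma \sum_{s_2, a_2, s_2'} p^{\bm{\pi}}(s_2, a_2)\, p^{\bm{\pi}}(s_2' \mid s_2, a_2)\, V_2^{\bm{\pi}, *}(s_2' \mid s_2, a_2)$.

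Next I would substitute the definition of $V_2^{\bm{\pi}, *}$, relabel its internal summation variables $(s_1^*, a_1^*)$ as $(s_1, a_1)$, and recombine $p^{\bm{\pi}}(s_2, a_2)\, p^{\bm{\pi}}(s_1, a_1 \mid s_2, a_2) = p^{\bm{\pi}}(s_1, s_2, a_1, a_2)$. The expression then reads $\gamma \sum_{s_1, s_2, a_1, a_2, s_1', s_2'} p^{\bm{\pi}}(s_1, s_2, a_1, a_2)\, p^{\bm{\pi}}(s_2' \mid s_2, a_2)\, p(s_1' \mid s_1, s_2, a_1, a_2, s_2')\, V_2^{\bm{\pi}}(s_1', s_2')$. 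To reach the claimed right-hand side I would apply the chain rule $T(s_1', s_2' \mid s_1, s_2, a_1, a_2) = p(s_2' \mid s_1, s_2, a_1, a_2)\, p(s_1' \mid s_1, s_2, a_1, a_2, s_2')$, so that $p(s_1' \mid s_1, s_2, a_1, a_2, s_2') = T(s_1', s_2' \mid s_1, s_2, a_1, a_2)/p(s_2' \mid s_1, s_2, a_1, a_2)$; inserting this ratio reproduces exactly the factor $T(s_1', s_2' \mid s_1, s_2, a_1, a_2)\, p^{\bm{\pi}}(s_2' \mid s_2, a_2)/p(s_2' \mid s_1, s_2, a_1, a_2)$ of the right-hand side, closing the equality.

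The argument is pure bookkeeping in the algebra of conditional probabilities, so I anticipate no genuine analytic obstacle; the one delicate point is interpreting the left-hand side factor as the starred counterfactual value and keeping the conditioning sets of the two kernels $p(s_2' \mid \cdot)$ and $p(s_1' \mid \cdot, s_2')$ consistent, since an off-by-one in what $s_1'$ is conditioned on would break the cancellation. A useful sanity check is that both sides equal $\gamma\,\mathbb{E}_\tau\!\left[\frac{p^{\bm{\pi}}(s_2' \mid s_2, a_2)}{p(s_2' \mid s_1, s_2, a_1, a_2)}\, V_2^{\bm{\pi}}(s_1', s_2')\right]$ under the trajectory distribution, which is precisely the importance-weighted form consumed afterward in the proof of Theorem~\ref{tho:voi}.
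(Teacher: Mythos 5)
Your proof is correct and follows essentially the same route as the paper's: both arguments rest on reading the left-hand factor as the counterfactual value $V_2^{\bm{\pi},*}(s_2'\mid s_2,a_2)$ (the lemma statement drops the star, but the paper's proof immediately unfolds the starred definition, confirming your reading), on the identity $\sum_{s_1,a_1} p^{\bm{\pi}}(s_1,a_1\mid s_2,a_2)\,p(s_2'\mid s_1,s_2,a_1,a_2)=p^{\bm{\pi}}(s_2'\mid s_2,a_2)$, and on the factorization $T(s_1',s_2'\mid s_1,s_2,a_1,a_2)=p(s_2'\mid s_1,s_2,a_1,a_2)\,p(s_1'\mid s_1,s_2,a_1,a_2,s_2')$. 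The only cosmetic difference is ordering: you marginalize $(s_1,a_1)$ out first and then re-expand the starred value, whereas the paper expands the starred value first and then exchanges the roles of the dummy pairs $(s_1,a_1)$ and $(s_1^*,a_1^*)$ under the common conditional measure $p^{\bm{\pi}}(\cdot\mid s_2,a_2)$ --- the computations are identical.
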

\begin{proof}
	\begin{eqnarray}
	&& \sum_{s_1, s_2, a_1, a_2}p^{\bm{\pi}}(s_1, s_2,  a_1, a_2) \gamma \sum_{s_2'} p(s_2' |s_1, s_2, a_1, a_2) V_2^{\bm{\pi}}(s_2' | s_2, a_2) \\
	&=& \sum_{s_1, s_2, a_1, a_2} p^{\bm{\pi}}(s_1, s_2, a_1, a_2) \gamma \sum_{s_2'} p(s_2' |s_1, s_2, a_1, a_2) \cdot \\
	&& \sum_{s_1^*, a_1^*} p^{\bm{\pi}}(s_1^*, a_1^* | s_2,  a_2) \sum_{s_1'} p(s_1' | s_1^*, s_2, a_1^*, a_2, s_2') V_2^{\bm{\pi}}(s_1', s_2') \\
	&=& \sum_{s_1, s_2, a_1, a_2}p^{\bm{\pi}}(s_1, s_2,  a_1, a_2) \gamma \sum_{s_2'} p(s_2' |s_1, s_2, a_1, a_2) \cdot \\
	&& \sum_{s_1^*, a_1^*} p^{\bm{\pi}}(s_1^*, a_1^* | s_2,  a_2) \sum_{s_1'} \frac{T(s_1', s_2' | s_1^*, s_2,  a_1^*, a_2)}{p(s_2' | s_1^*, s_2, a_1^*, a_2)} V_2^{\bm{\pi}}(s_1', s_2') \\
	&=& \sum_{s_1, s_2,  a_1, a_2}p^{\bm{\pi}}(s_1, s_2, a_1, a_2) \gamma \sum_{s_1', s_2'} \frac{T(s_1', s_2' | s_1, s_2,  a_1, a_2)}{p(s_2' | s_1, s_2,  a_1, a_2)} \cdot \\
	&& V_2^{\bm{\pi}}(s_1', s_2') \sum_{s_1^*, a_1^*} p^{\bm{\pi}}(s_1^*, a_1^* | s_2,  a_2) p(s_2' | s_1^*, s_2, a_1^*, a_2) \\
	&=& \sum_{s_1, s_2,  a_1, a_2}p^{\bm{\pi}}(s_1, s_2, a_1, a_2) \gamma \sum_{s_1', s_2'} T(s_1', s_2' | s_1, s_2,  a_1, a_2)\cdot \\
	&& \frac{p^{\bm{\pi}}(s_2' | s_2,  a_2)}{p(s_2' | s_1, s_2,  a_1, a_2)} V_2^{\bm{\pi}}(s_1', s_2'). 
	\end{eqnarray}
\end{proof}
We now give the proof of Theorem~\ref{tho:voi}:
\begin{proof}
    \begin{eqnarray}
    	&& VoI_{2|1}^{\bm{\pi}}(S_2';S_1,A_1|S_2,A_2) \\
    	&=& \sum_{\bm{s},\bm{a},s_2'\in(S,A,S_2)} p^{\bm{\pi}}(\bm{s},\bm{a},s_2') \left[Q_2^{\bm{\pi}}(\bm{s},\bm{a},s_2') - Q_{2|1}^{\bm{\pi}, *}(s_2,a_2,s_2')\right] \\
    	&=& \sum_{s_1, s_2, a_1, a_2} p^{\bm{\pi}}(s_1, s_2, a_1, a_2) (\tilde{r}_2(s_1, s_2, a_1, a_2) - \tilde{r}_2^{\bm{\pi}}(s_2, a_2)  + \\
    	&& \gamma \sum_{s_2'} p(s_2' | s_1, s_2, a_1, a_2) (V_2^{\bm{\pi}}(s_2' | s_1, s_2, a_1, a_2) - V_2^{\bm{\pi},*}(s_2' | s_2, a_2)) \\
    	&=& \sum_{s_1, s_2, a_1, a_2} p^{\bm{\pi}}(s_1, s_2, a_1, a_2) (\tilde{r}_2(s_1, s_2, a_1, a_2) - \tilde{r}_2^{\bm{\pi}}(s_2, a_2)  + \\
    	&& \gamma  \sum_{s_1', s_2'} T(s_1', s_2' |s_1, s_2, a_1, a_2) (1 - \frac{p^{\bm{\pi}}(s_2' | s_2, a_2)}{p(s_2' | s_1, s_2, a_1, a_2)}) V_2^{\bm{\pi}}(s_1', s_2')) \textbf{ (Lemma 1)} \\
    	&=& \mathbb{E}_\tau\left[\tilde{r}_2(\vs, \va) - \tilde{r}_2^{\bm{\pi}}(s_2, a_2)
            + \gamma \left(1 - \frac{p^{\bm{\pi}}(s_2' | s_2, a_2)}{p(s_2' |\vs, \va)}\right) V_2^{\bm{\pi}}(\vs')\right].
    \end{eqnarray}
\end{proof}

\subsection{Calculating Gradient of VoI}\label{appendix:gradient_of_VoI}
In order to optimize $VoI$ with respect to the parameters of agent policy, in Sec.~\ref{sec:optimize_VoI}, we propose to use target function and get:
\begin{equation}
\begin{aligned}
        \nabla_{\theta_1}VoI_{2|1}^{\bm{\pi}}(S_2';S_1,A_1|S_2,A_2) \approx \sum_{\vs,\va\in (S,A)} &\left(\nabla_{\theta_1} p^{\bm{\pi}}(\vs,\va)\right)
        \large[\tilde{r}_2(\vs,\va) - \tilde{r}^-_2(s_2, a_2) + \\ &\gamma  \sum_{\vs'} T(\vs'|\vs, \va) \left(1 - \frac{p^-(s_2' | s_2, a_2)}{p(s_2'|\vs, \va)}\right)V^-_2(s_1', s_2')\large].
\end{aligned}
\end{equation}
We prove that $\sum_{\vs, \va} \left(\nabla_{\theta_1} p^{\bm{\pi}}(\vs, \va)\right) \tilde{r}^-_2(s_2, a_2)$ is $0$ in the following lemma.
\begin{lemma}\label{lemma:2}
	\begin{eqnarray}
		\sum_{s_1, s_2, a_1, a_2} \left(\nabla_{\theta_1} p^{\bm{\pi}}(s_1, s_2, a_1, a_2)\right) \tilde{r}^-_2(s_2, a_2) = 0.
	\end{eqnarray}
\end{lemma}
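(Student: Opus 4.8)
The plan is to reduce the lemma to exactly the elementary cancellation that makes term $T2$ vanish in Appendix~\ref{appendix:mi}. The two facts I would lean on are that the counterfactual immediate reward $\tilde{r}^-_2(s_2,a_2)$ is a function of agent 2's state and action \emph{only}, and that, being a target function, it carries no $\theta_1$-dependence of its own (its gradient over $\theta_1$ is ignored, as stated in Sec.~\ref{sec:optimize_VoI}). Consequently the lemma is really a statement about the $\theta_1$-gradient of the joint state-action distribution weighted by a quantity that does not see $a_1$.

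First I would factor the joint distribution through the decentralized policies, $p^{\bm{\pi}}(s_1,s_2,a_1,a_2) = p^{\bm{\pi}}(s_1,s_2)\,\pi_{\theta_1}(a_1|s_1)\,\pi_{\theta_2}(a_2|s_2)$, so that the explicit occurrence of $\theta_1$ is isolated in the factor $\pi_{\theta_1}(a_1|s_1)$. Differentiating that factor and substituting into the sum yields
\begin{equation}
\sum_{s_1,s_2,a_1,a_2} p^{\bm{\pi}}(s_1,s_2)\,\big(\nabla_{\theta_1}\pi_{\theta_1}(a_1|s_1)\big)\,\pi_{\theta_2}(a_2|s_2)\,\tilde{r}^-_2(s_2,a_2).
\end{equation}
Since neither $\tilde{r}^-_2(s_2,a_2)$ nor any of the surviving factors depend on $a_1$, I would then carry out the sum over $a_1$ first and invoke the normalization identity $\sum_{a_1}\nabla_{\theta_1}\pi_{\theta_1}(a_1|s_1) = \nabla_{\theta_1}\sum_{a_1}\pi_{\theta_1}(a_1|s_1) = \nabla_{\theta_1}1 = 0$ --- the same step that collapses $T2$. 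The remaining sums over $s_1,s_2,a_2$ merely multiply this vanishing inner sum, so the whole expression is $0$.

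The step I expect to be the main obstacle is the bookkeeping of the $\theta_1$-dependence hidden in the state-visitation factor $p^{\bm{\pi}}(s_1,s_2)$: a fully rigorous accounting would, through the policy gradient theorem, also differentiate the occupancy and produce an extra score-function contribution. I would handle this consistently with the rest of the VoI derivation, where the occupancy-measure gradient is separated from the direct policy gradient --- mirroring the split into the two terms of Eq.~\ref{equ:first_step_of_gradient_of_mi} --- so that the present lemma pertains to the direct-gradient piece. The essential point, which makes the cancellation robust, is simply that $\tilde{r}^-_2$ is independent of $a_1$ and $s_1$, so it factors outside the $a_1$-sum; verifying this pull-out is immediate from its definition as a function of $(s_2,a_2)$ alone.
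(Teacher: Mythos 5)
Your proof is correct in substance and follows essentially the same route as the paper's: factor the joint occupancy so that $\tilde{r}^-_2(s_2,a_2)$, being a target function of $(s_2,a_2)$ alone, pulls out of the inner sum, and then kill that inner sum as the $\nabla_{\theta_1}$ of a normalization constant. The only differences are minor --- the paper conditions on $s_2$ and differentiates the entire conditional $p_1^{\bm{\pi}}(s_1,a_1\mid s_2)$ (so the normalization trick also absorbs the $\theta_1$-dependence of $p(s_1\mid s_2)$, not just of $\pi_{\theta_1}(a_1\mid s_1)$), and the occupancy-gradient issue you flag as the ``main obstacle'' is equally unresolved in the paper's own proof, whose first equality silently drops the product-rule term $\left(\nabla_{\theta_1}p^{\bm{\pi}}(s_2,a_2)\right)p_1^{\bm{\pi}}(s_1,a_1\mid s_2)$.
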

\begin{proof}
	Similar to the way that policy gradient theorem was proved by \cite{sutton2000policy},
	\begin{eqnarray}
		&& \sum_{s_1, s_2, a_1, a_2} \left(\nabla_{\theta_1} p^{\bm{\pi}}(s_1, s_2, a_1, a_2)\right) \tilde{r}^-_2(s_2, a_2) \\
		&=& \sum_{s_2, a_2} p^{\bm{\pi}}(s_2, a_2) \sum_{s_1, a_1} \left(\nabla_{\theta_1} p^{\bm{\pi}}_1(s_1, a_1 | s_2) \right) \tilde{r}^-_2(s_2, a_2) \\
		&=& \sum_{s_2, a_2} p^{\bm{\pi}}(s_2, a_2) \sum_{s_1, a_1} \left(\nabla_{\theta_1} p^{\bm{\pi}}_1(s_1, a_1 | s_2) \right) \sum_{s_1^*, a_1^*} p^-(s_1^*, a_1^* | s_2, a_2) \tilde{r}_2(s_1^*, s_2, a_1^*,  a_2) \\
		&=& \sum_{s_2, a_2} p^{\bm{\pi}}(s_2, a_2) \sum_{s_1^*, a_1^*} p^-(s_1^*, a_1^* | s_2, a_2) \tilde{r}_2(s_1^*, s_2, a_1^*, a_2) \sum_{s_1, a_1} \left(\nabla_{\theta_1} p^{\bm{\pi}}_1(s_1, a_1 | s_2) \right) \\
		&=& \sum_{s_2, a_2} p^{\bm{\pi}}(s_2, a_2) \sum_{s_1^*, a_1^*} p^-(s_1^*, a_1^* | s_2, a_2) \tilde{r}_2(s_1^*, s_2, a_1^*, a_2) \left(\nabla_{\theta_1} 1 \right) \\
		&=& 0.
	\end{eqnarray}
\end{proof}

\subsection{Immediate Reward Influence}\label{appendix:r_influence}
Similar to MI and $VoI$, we can define influence of agent 1 on the immediate rewards of agent 2 as:
\begin{equation}
\begin{aligned}\label{equ:ri}
        RI_{2|1}^{\bm{\pi}}(S_2';S_1,A_1|S_2,A_2) = \sum_{\vs,\va\in (S,A)} p^{\bm{\pi}}(\vs,\va)
        \large[\tilde{r}_2(\vs,\va) - \tilde{r}_2(s_2, a_2)\large].
\end{aligned}
\end{equation}
Use Lemma~\ref{lemma:2}, we can get:
\begin{equation}\label{equ:ri_gradient}
\begin{aligned}
        \nabla_{\theta_1} RI_{2|1}^{\bm{\pi}}(S_2';S_1,A_1|S_2,A_2) = \sum_{\vs,\va\in (S,A)} \nabla_{\theta_1}(p^{\bm{\pi}}(\vs,\va))
        \tilde{r}_2(\vs,\va).
\end{aligned}
\end{equation}
Now we have
\begin{equation}\label{equ:r_influence_gradient}
    \nabla_{\theta_1} J_{\theta_1}(t) \approx \left(\hat{R}_1^t-\hat{V}_1^{\bm{\pi}}(s_t)\right)\nabla_{\theta_1}\log\pi_{\theta_1}(a_1^t|s_1^t),
\end{equation}
where $\hat{V}_1^{\bm{\pi}}(s_t)$ is an augmented value function of $\hat{R}_1^t=\sum_{t'=t}^h\hat{r}_1^{t'}$ and
\begin{equation}\label{equ:r_influence_reward}
\hat{r}_1^t = r^t + u_1^t + \beta u_2^t.
\end{equation}

\section{Estimation of Conditional Probabilities}\label{appendix:p_d_p}
To quantify interdependence among exploration processes of agents, we use mutual information and value of interaction. Calculations of MI and VoI need estimation of  $p(s_2'|s_2,a_2)$ and $p(s_2'|\bm{s},\bm{a})$. In practice, we track the empirical frequencies $p_{emp}(s_2'|s_2,a_2)$ and $p_{emp}(s_2'|\bm{s},\bm{a})$ and substitute them for the corresponding terms in Eq.~\ref{equ:eiti_reward} and~\ref{equ:edti_reward}. 

Estimating $p_{emp}(s_2'|s_2, a_2)$ and $p_{emp}(s_2'|\vs, \va)$ is one obstacle to the scalability of our method, we now discuss how to solve this problem. When the state and action space is small, we can use hash table to implement Monte Carlo method (MC) for estimating the distributions accurately. In the MC sampling, we count from the samples the state frequencies $p(s_2'|s_2,a_2) \equiv \frac{N(s_2', s_2,a_2)}{N(s_2,a_2)}$ and $p(s_2'|\vs, \va) \equiv \frac{N(s_2', s_1,s_2,a_1,a_2)}{N(s_1,s_2,a_1,a_2)}$, where $N(\cdot)$ is the number of times each state-action pair was visited during the learning process. When the problem space becomes large, MC consumes large memory in practice. As an alternative, we adopt variational inference~\citep{fox2012tutorial} to learn variational distributions $q_{\xi_1}(s_2'|s_2,a_2)$ and $q_{\xi_2}(s_2'|\vs, \va)$, parameterized via neural networks with parameters $\xi_1$ and $\xi_2$, by optimizing the evidence lower bound. In Fig.~\ref{fig:vi_pass}, we show the performance of EDTI estimated using variational inference and the changing of associated EDTI rewards on pass during 9000 PPO updates. Variational inference introduces some noise in EDTI rewards estimation and thus requires slightly more steps to learn the true probability and the strategy. However, estimating using MC sampling consumes 1.6G memory to save the hash table with 100M items each agent while variational inference needs a three-layer fully connected network with 74800 parameters occupying about 0.60M memory. This results highlights the feasibility of estimating EITI and EDTI rewards using variational inference in problem with large state-action space.
\begin{figure}
    \centering
    \includegraphics[height=0.28\linewidth]{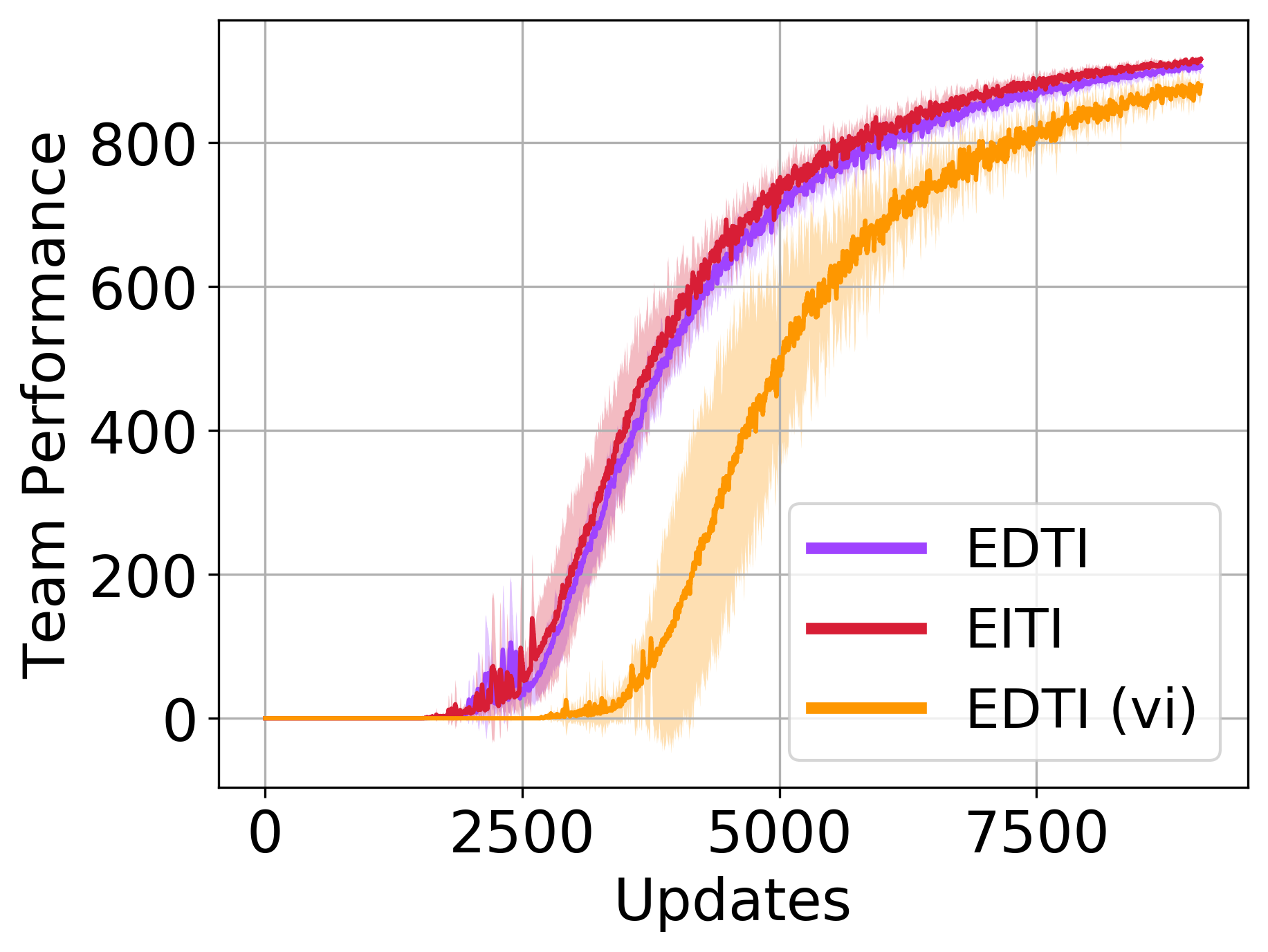}\label{fig:vi_pass_lr}\hfill
    \includegraphics[height=0.28\linewidth]{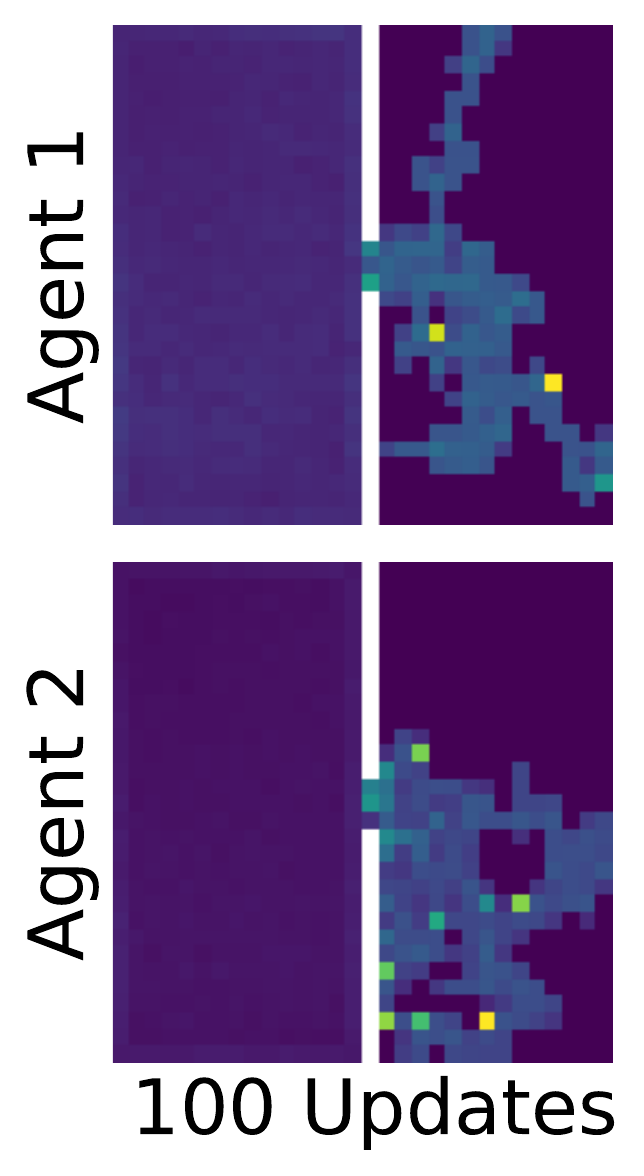}\label{fig:vi_pass_100}\hfill
    \includegraphics[height=0.28\linewidth]{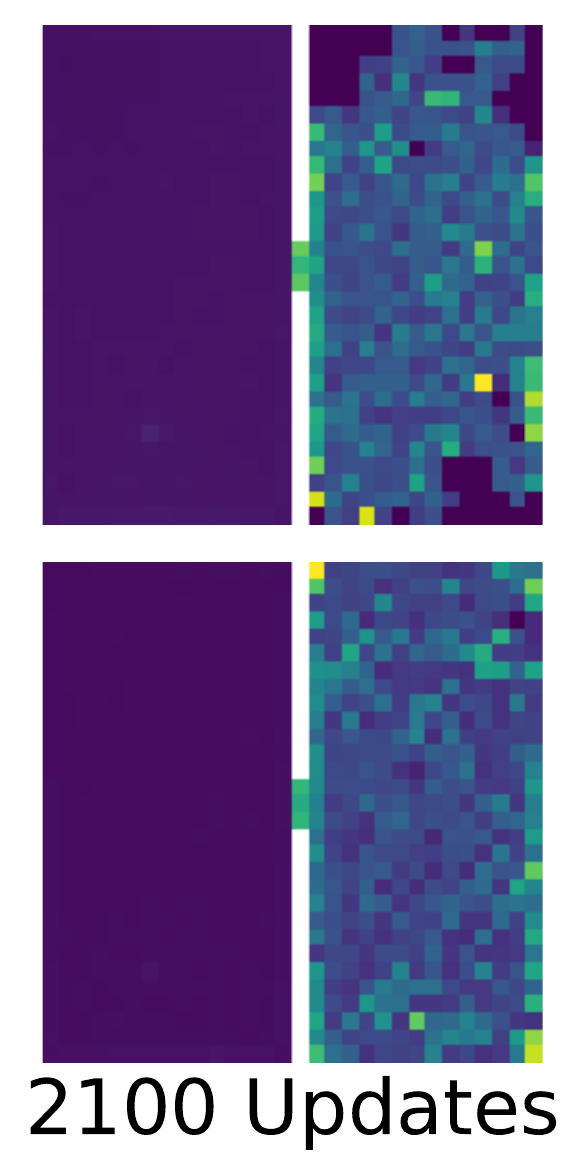}\label{fig:vi_pass_2100}\hfill
    \includegraphics[height=0.28\linewidth]{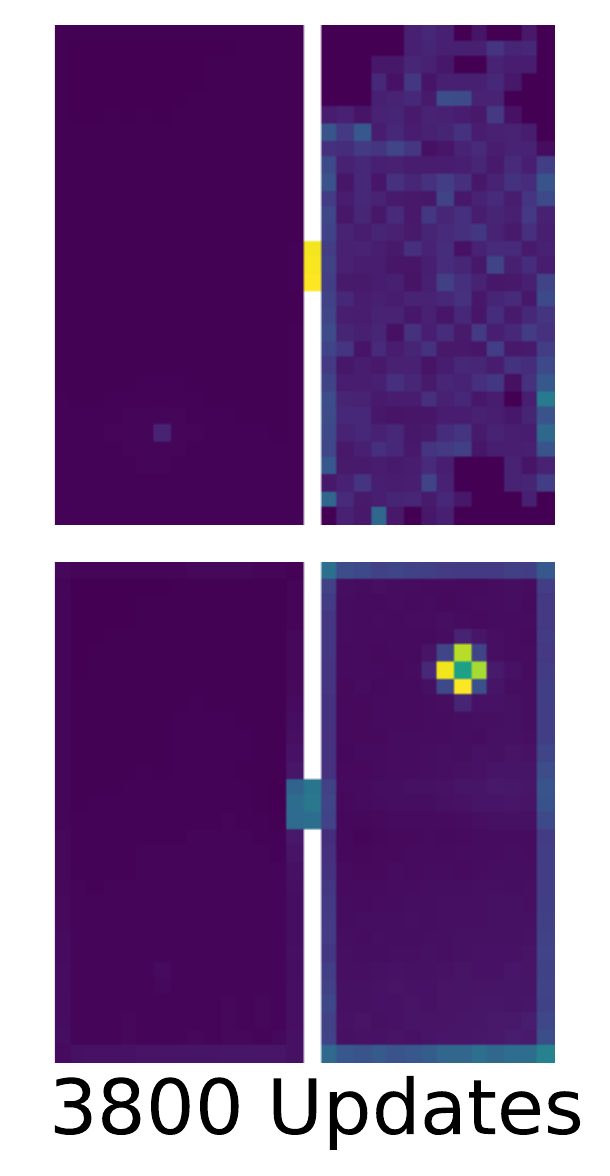}\label{fig:vi_pass_3800}\hfill
    \includegraphics[height=0.28\linewidth]{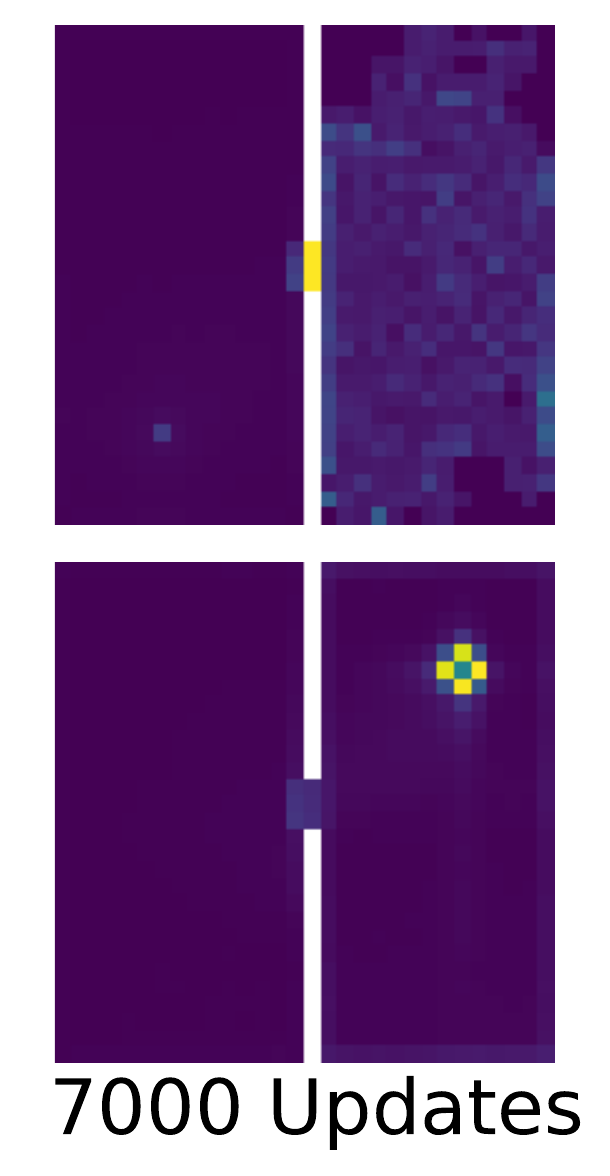}\label{fig:vi_pass_7000}
    \includegraphics[height=0.28\linewidth]{figvi/vi_pass_colorbar.pdf}\label{fig:vi_pass_colorbar}\hfill
    \caption{Left: Performance of EDTI (vi) (EDIT estimated using variational inference) compared with EITI and EDTI estimated using MC sampling. Others: Development of EDTI (vi) rewards during exploration process. Top row: EDTI (vi) rewards of agent 1; bottom row: EDTI (vi) rewards of agent 2.}
    \label{fig:vi_pass}
\end{figure}

\section{Implementation Details}\label{appendix:implementation_details}
\subsection{Network Architecture, Hyperparameters, and Infrastructure}
We base our framework on OpenAI implementation of PPO2~\citep{baselines} and use its default parameters to carry out all the experiments. We train our models on an NVIDIA RTX 2080TI GPU using experience sampled from 32 parallel environments. We use visitation count to calculate the intrinsic reward, for its provable effectiveness~\citep{azar2017minimax, jin2018is}. For all our methods and baselines, we use $\eta/\sqrt{N(s)}$ as the exploration bonus for $N(s)$-th visit to state $s$. Specific values of $\eta$ and scaling weights can be found in Table~\ref{tab:weights}.

\begin{table}[t]
    \centering
    \caption{The scaling weights for different intrinsic reward terms in various tasks. $\beta_{\text{T}}$ is the weight of term $T_1$ (see Table~\ref{tab:baselines}). $\beta_{\text{int}}$ and $\beta_{\text{ext}}$ are scaling factors to combine $r$ and $u_i$ in $\tilde{r}$. $u_{\shortn i}$ in r\_influence is scaled by $\beta_{\text{r}}$ while $V^{int}_{\shortn i}$ and $V^{ext}_{\shortn i}$ in plusV are respectively scaled by $\beta^{\text{plusV}}_{\text{int}}$ and $\beta^{\text{plusV}}_{\text{ext}}$.}
    \label{tab:weights}
	\begin{tabular}{cccccccc}
	    \toprule
		Task &  $\eta$ & $\beta_{\text{T}}$ & $\beta_{\text{int}}$ & $\beta_{\text{ext}}$ & $\beta_{\text{r}}$ & $\beta^{\text{plusV}}_{\text{int}}$ & $\beta^{\text{plusV}}_{\text{ext}}$ \\ 
		\cmidrule(lr){1-8}
		Pass &  10. & 10 & 1. & 0.1 & 1. & 0.1 & 0.01 \\ 
		Secret-room & 10. & 10 & 1. & 0.1 & --- & --- & --- \\ 
		Push-ball &  1. & 100. & 100. & 0.1 & 0.1 & 0.1 & 0.01 \\ 
		Island &  1. & 10 & 10. & 0.5 & 0.1 & 0.1 & 0.01 \\ 
		Large-island & 1. & 10 & 1. & 0.1 & 0.1 & 0.1 & 0.01 \\ 
		\toprule
	\end{tabular}
\end{table}

As for variational inference, the inference network is a 3-layer fully-connected network coupled with a 64-dimensional reparameterization estimator. ReLU is used as the activation function for the first two layers and the sum of negative log-likelihood and negative Evidence Lower Bound is used as loss. We use Adam optimizer~\citep{Kingma2014Adam} with learning rate $1 \times 10^{\shortn 3}$ and batchsize 2048. To speed up the learning of variational distributions estimation, we equip the learning with proportional prioritized experience replay \citep{schaul2015prioritized}.

\subsection{Task Structure}
\begin{figure}
    \centering
    \includegraphics[height=0.3\linewidth]{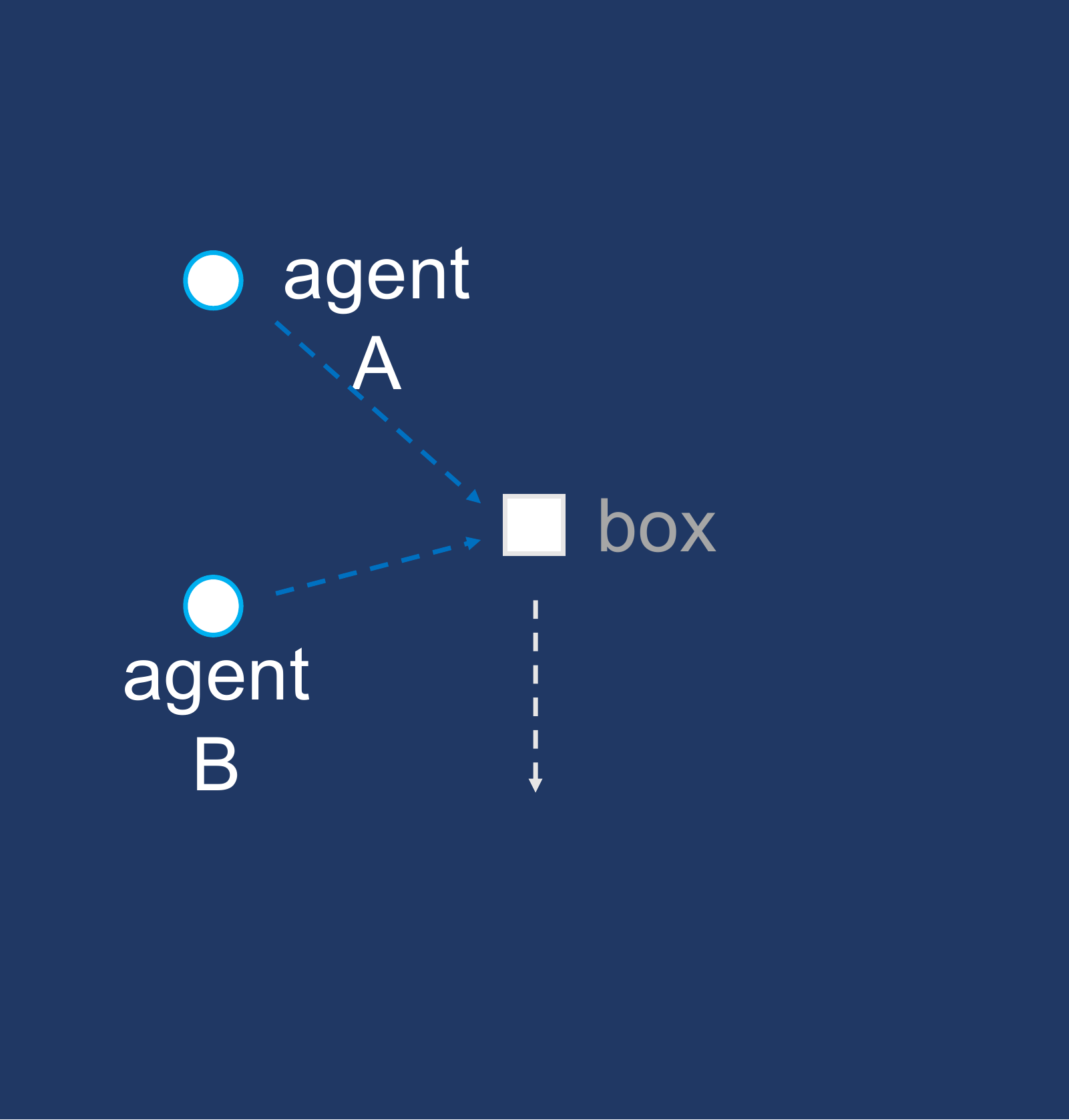}\label{fig:env_push_box}\hfill
    \includegraphics[height=0.3\linewidth]{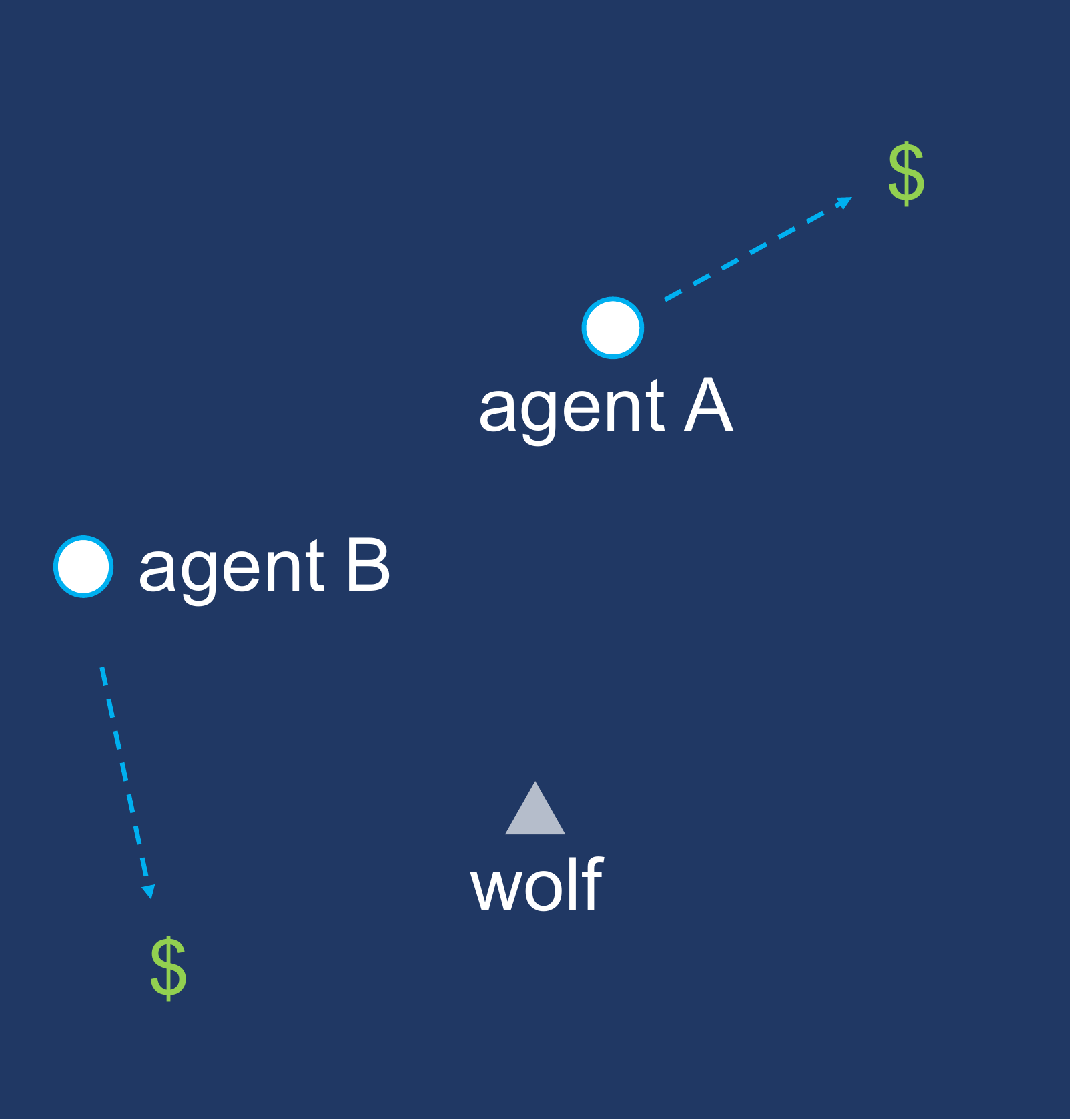}\label{fig:env_island}\hfill
    \includegraphics[height=0.3\linewidth]{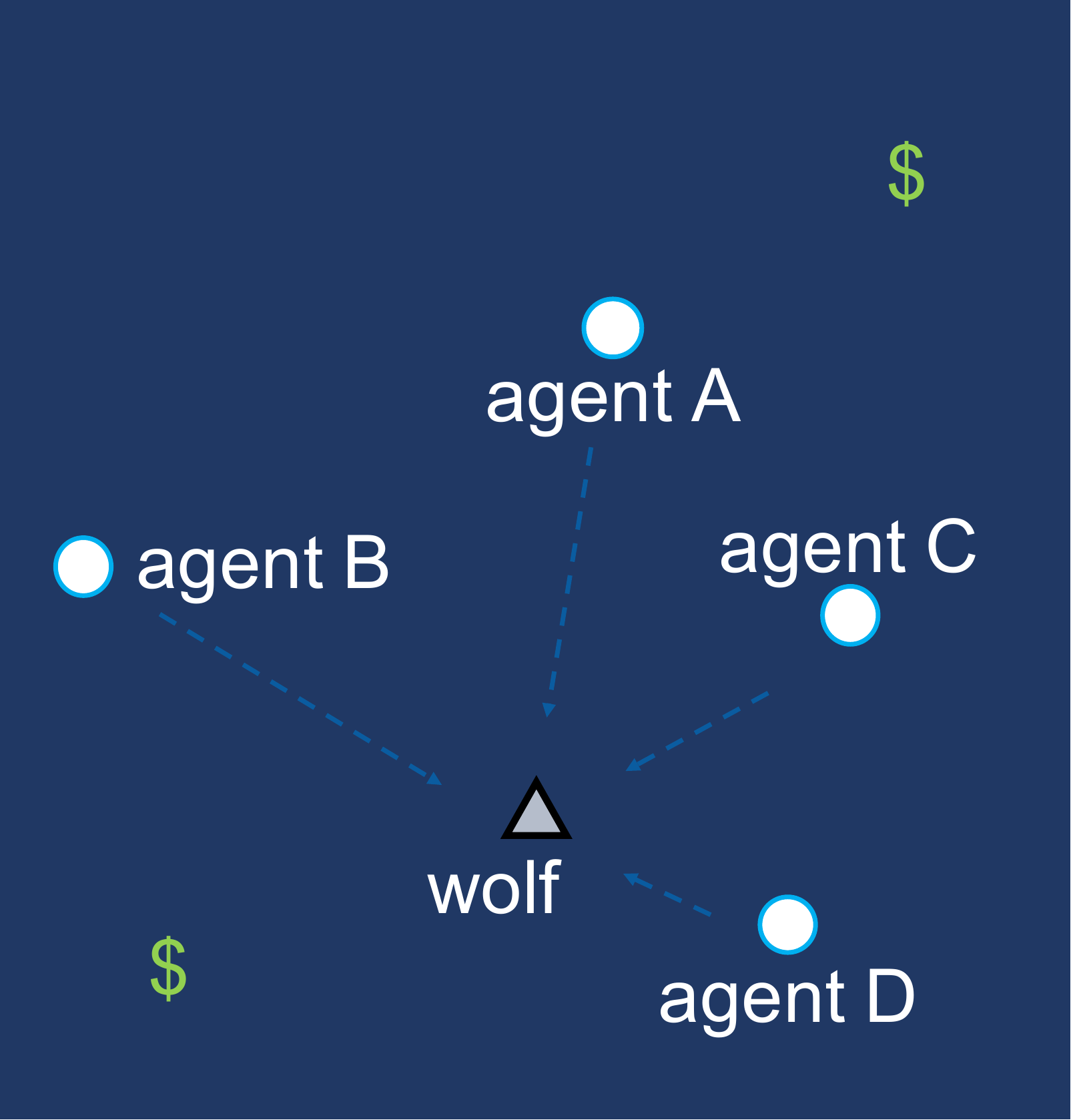}\label{fig:env_large_island}
    \caption{Task \textbf{push-box}, \textbf{island}, and \textbf{large-island}}
    \label{fig:more_env}
\end{figure}
In this section, we describe detailed settings of our task, including task specification and content of $s_i$. 

\textbf{Pass:} There are two agents and two switches to open the door in a $30 \times 30$ grid. Only when at least one of the switches are occupied will the door open. The agents need navigate from left to right and the team reward, which is 1000, is only provided when all agents reach the target zone. Agents can observe the position of another agents.

\textbf{Secret-room:} This is an extension of the pass task with 4 rooms and 4 switches locating in different rooms. The size of the grid is $25\times 25$. When the left switch is occupied, all the three doors are open. And the three switches in each room on the right only control the door of its room. The agents need to navigate towards the desired room (in light red of Fig. \ref{fig:didactic_examples} middle) to achieve the extrinsic team reward 1000. Agents can observe the position of the other agents.

\textbf{Push-ball:} There are two agents and one box in a $15 \times 15$ grid. Agents need to push the box to the wall. However, the box is so heavy that only when two agents push it in the same direction at the same time can it be moved a grid. The only team reward, 1000, is given when the box is placed right against the wall. Agents can observe the coordinates of their teammate and the location of the box.

\textbf{Island:} A group of two agents are hunting for treasure on an island. However, a random walking beast may attack the agents when they are too close. The agents can also attack the beast within their attack range. This hurt doubles when more than one agent attack at the same time. Each agent has a maximum health of 5 and will lose $1/n$ health per step when there are $n$ agents within the attack range of the beast. Island is a modified version of the classic coordination scenario \emph{stag-hunt} with local optimal, because finding each treasure (9 in total) will trigger a team reward of 10 but catching the beast gives a higher team reward of 300. Agents can observe the position and health of each other, and the coordinates of the beast. Fig.~\ref{fig:island_details} shows the development of the probability of catching the beast and the averaged number of treasures found in an episode during 9000 PPO updates.

\textbf{Large-island: } Settings are similar to that of island but with more agents (4), more treasures (16), and a beast with more energy (16) and a higher reward (600) for being caught.

The horizon of one episode is set to 300 timesteps in all these tasks.

\begin{figure}
    \centering
    \includegraphics[height=0.3\linewidth]{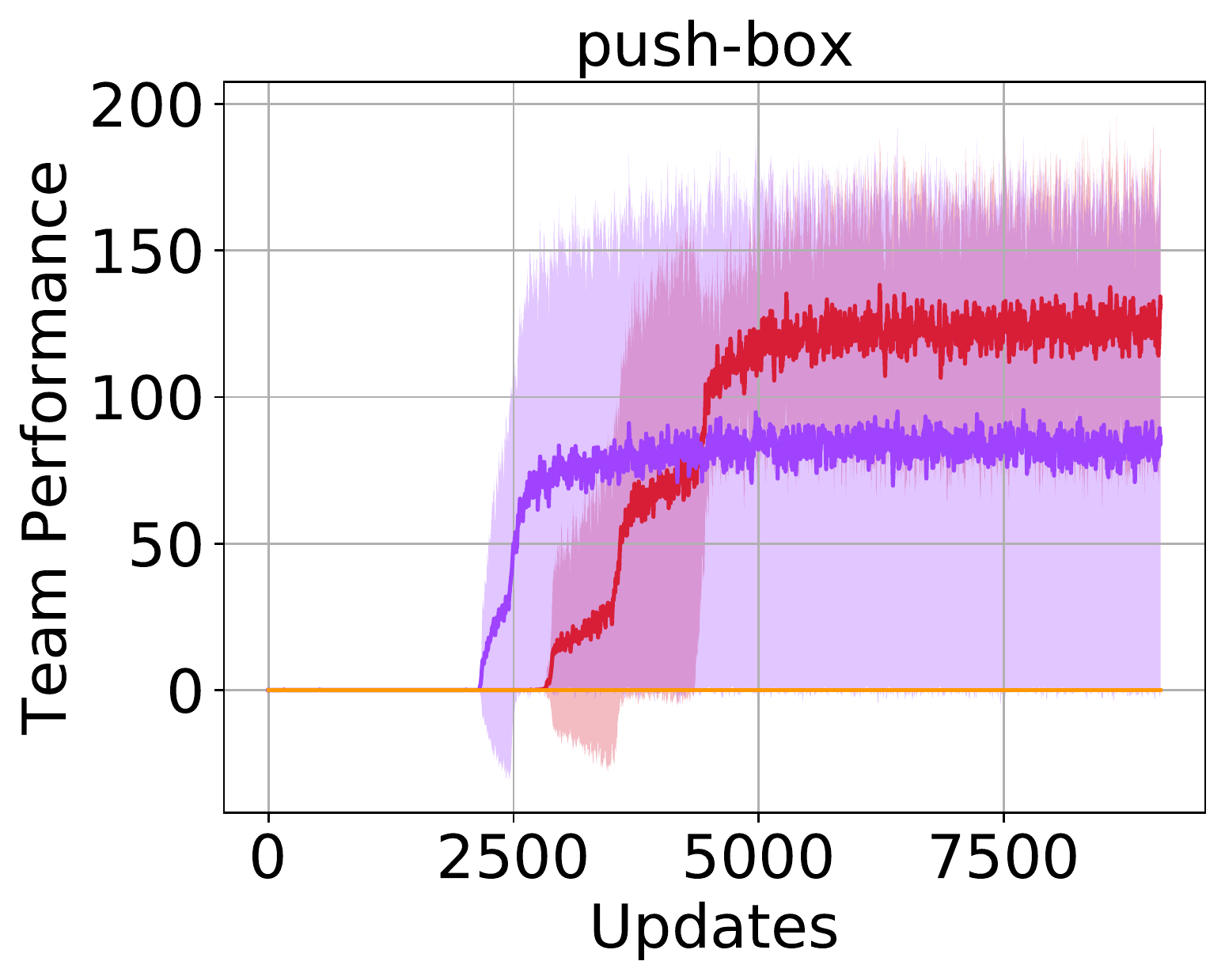}\hfill
    \includegraphics[height=0.3\linewidth]{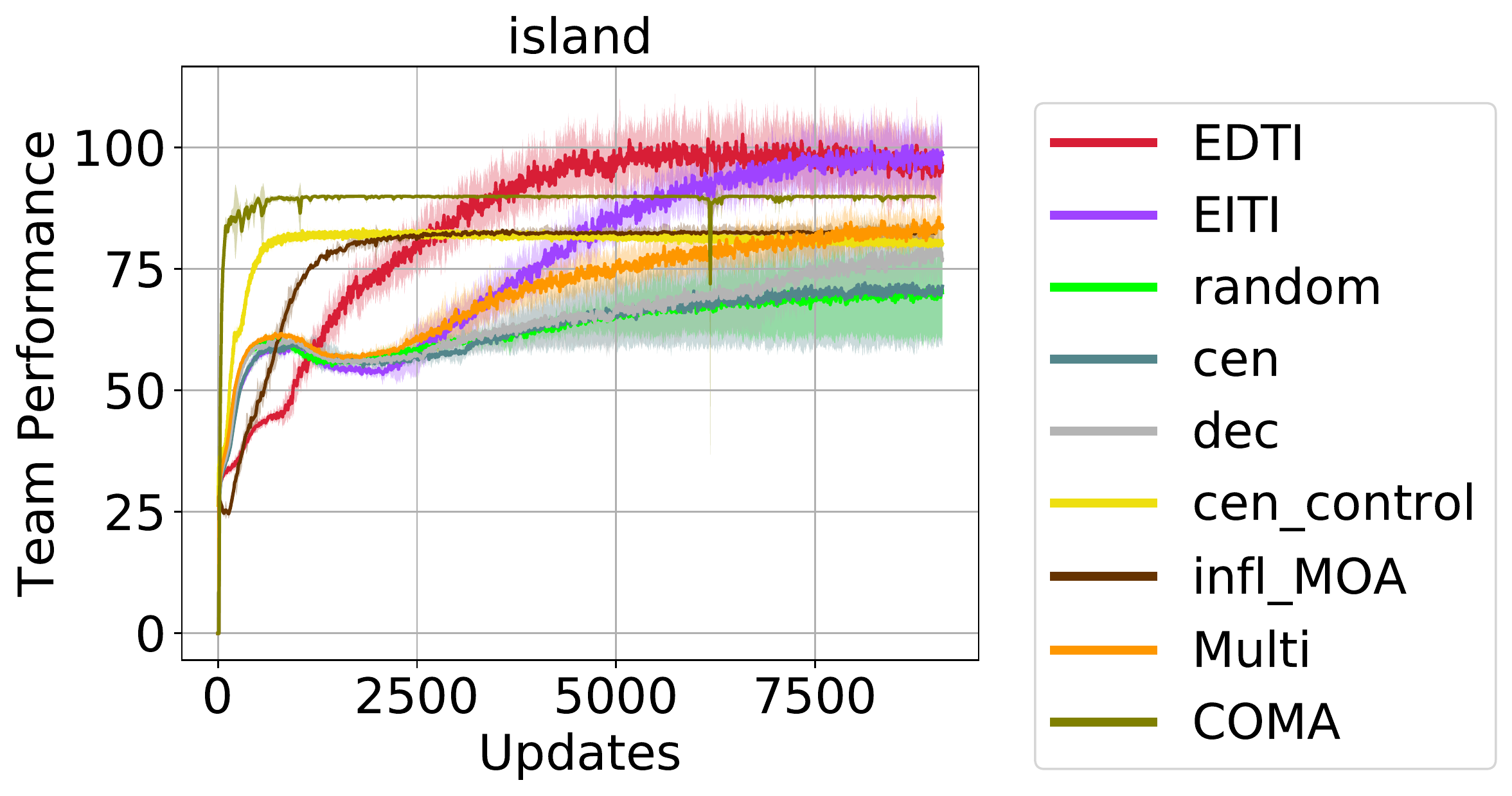}
    \caption{Comparison of our methods against baselines on push-box (left), island (right).}
    \label{fig:push_island_large_island}
\end{figure}

\begin{figure}
    \centering
    \includegraphics[height=0.31\linewidth]{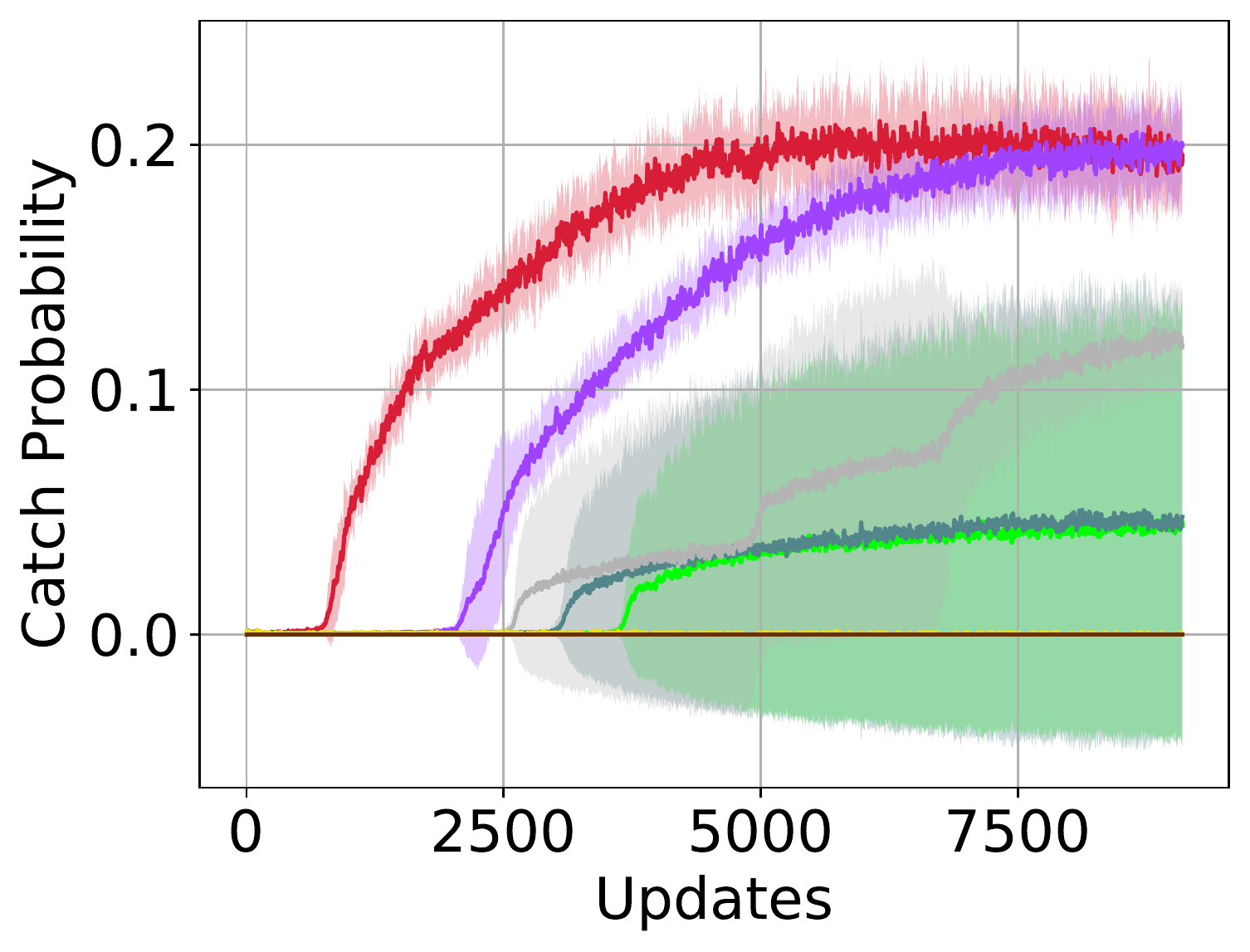}\hfill
    \includegraphics[height=0.31\linewidth]{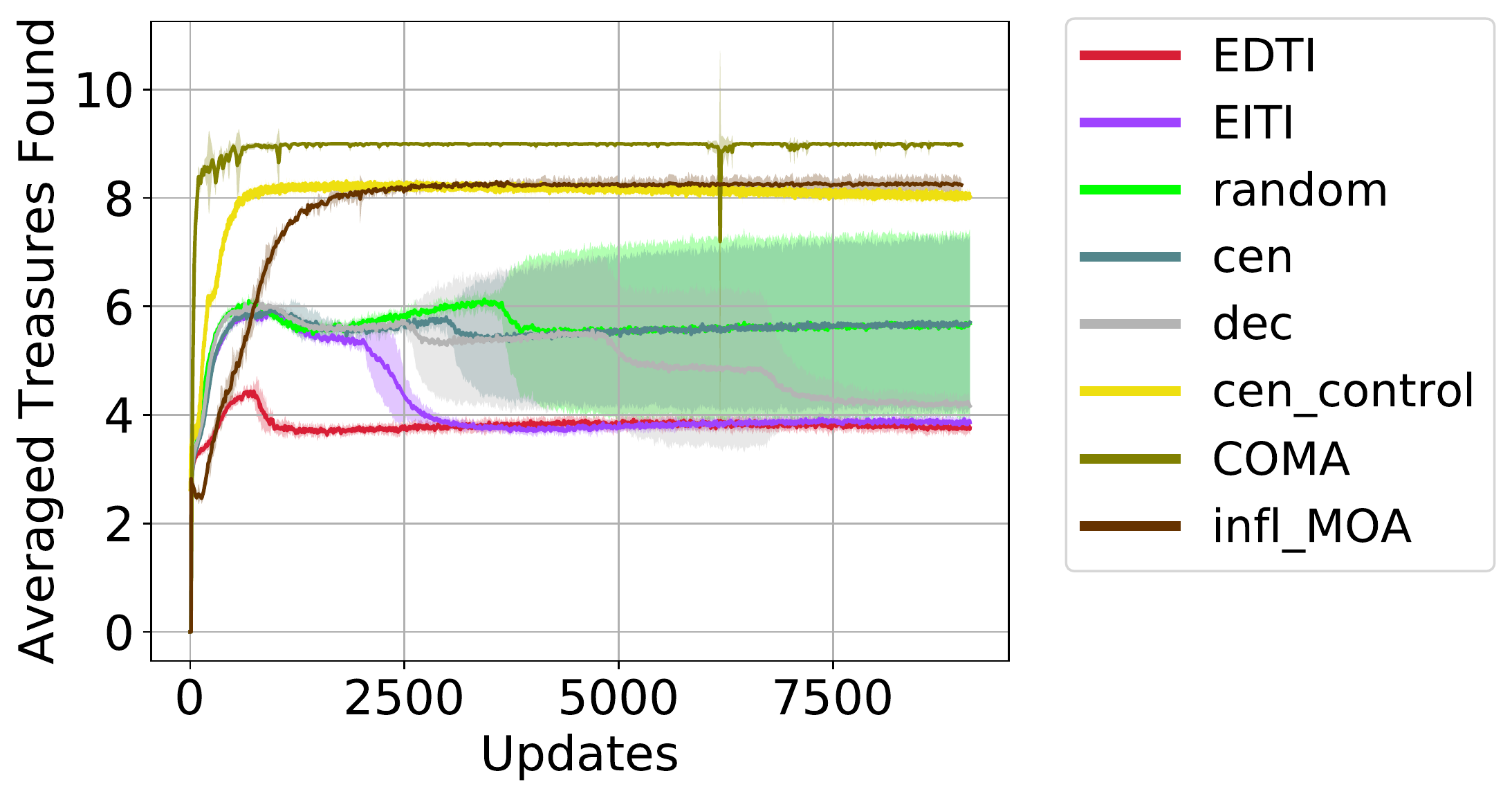} \\
    \includegraphics[height=0.3\linewidth]{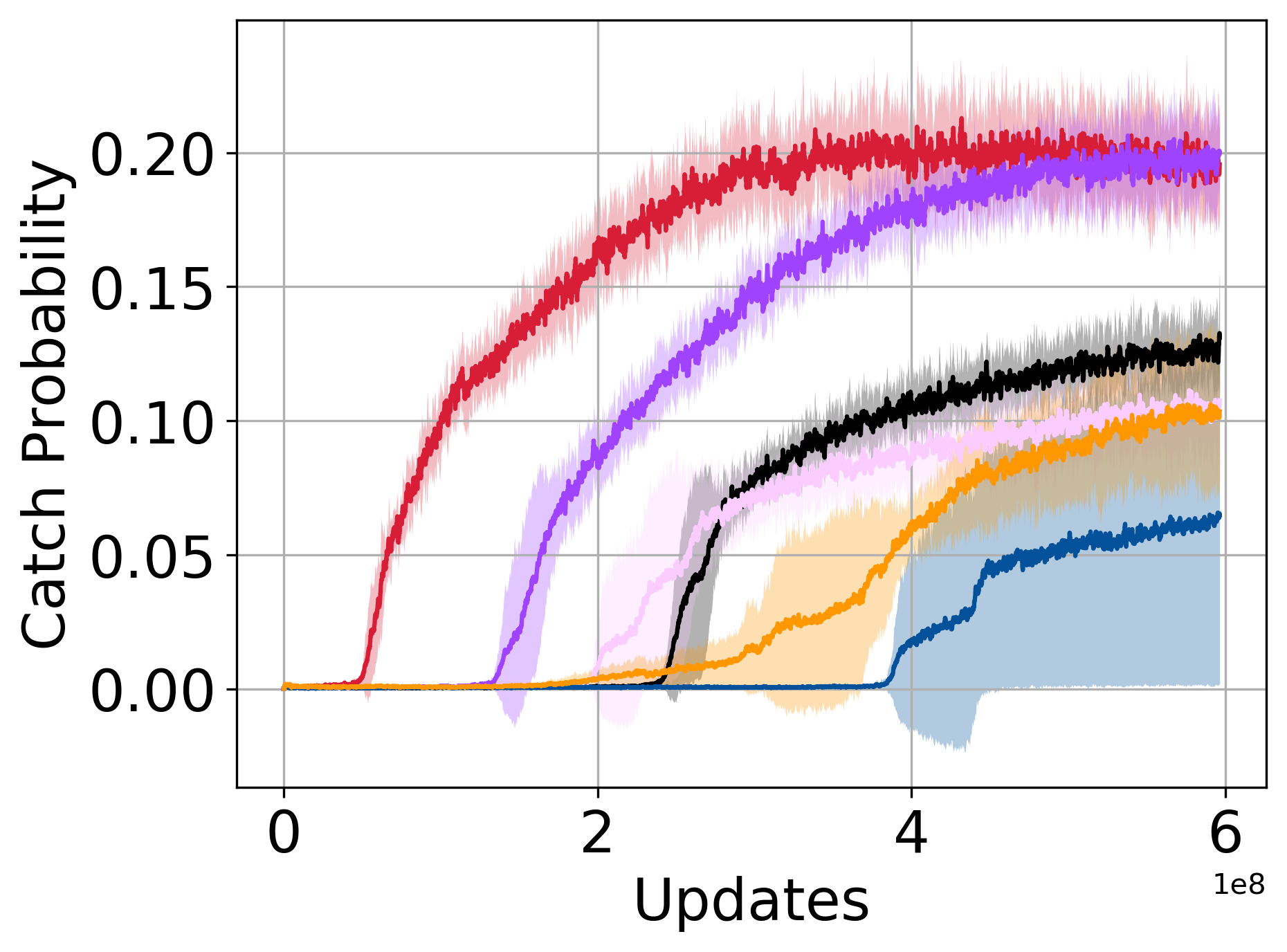}\hfill
    \includegraphics[height=0.3\linewidth]{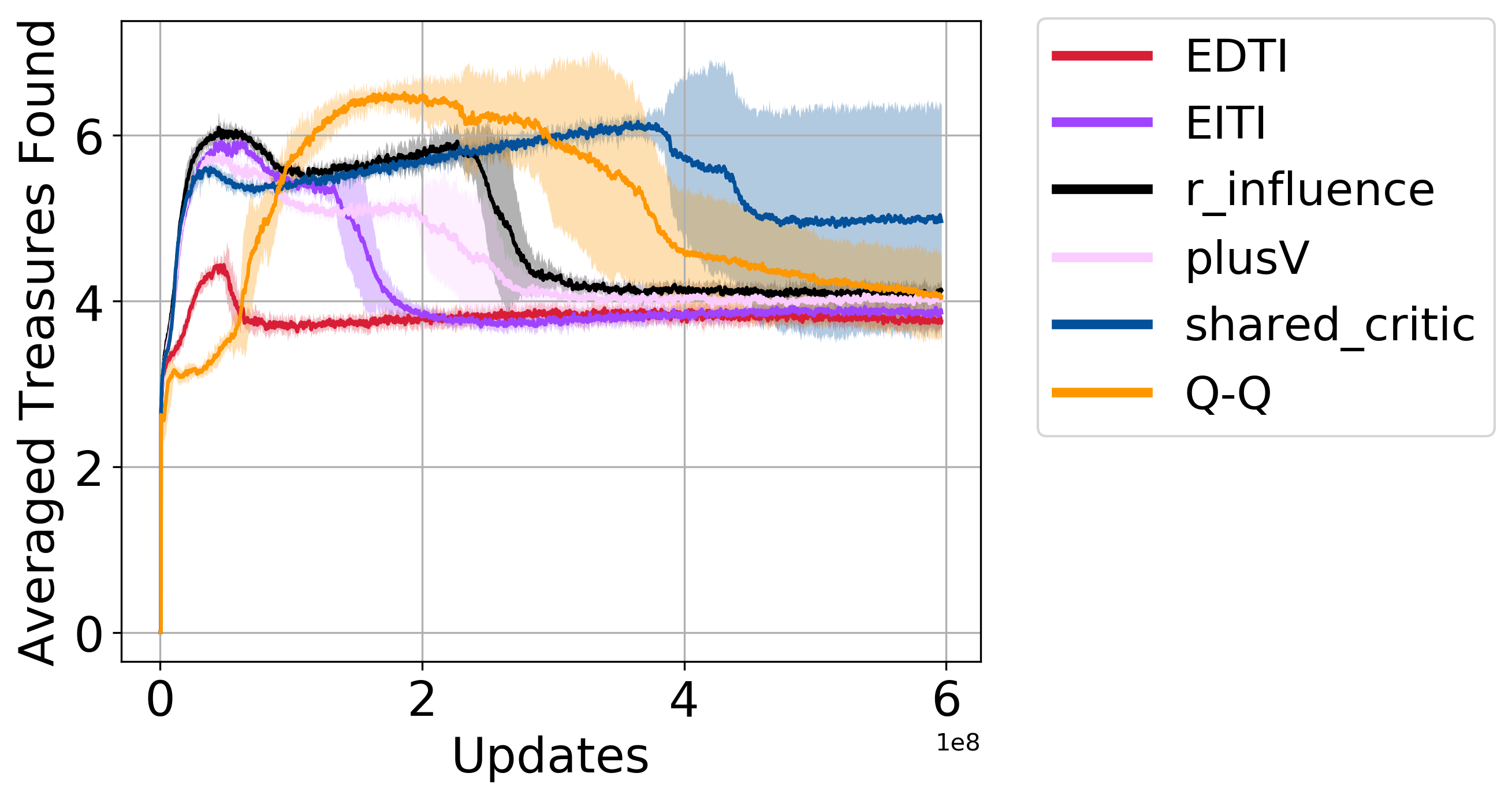} \\
    \caption{Comparison of our methods against baselines and ablations on island in terms of the probability of catching the beast and the averaged treasures collected in an episode.}
    \label{fig:island_details}
\end{figure}

\end{document}